\documentclass[sigconf]{acmart}
\usepackage{microtype}
\usepackage{graphicx}
\usepackage{subfigure}
\usepackage{booktabs} 
\usepackage{hyperref}


\usepackage[ruled,norelsize]{algorithm2e}
\usepackage{algorithmic}
\usepackage{amsmath}
\usepackage{mathtools}
\usepackage{amsthm}
\usepackage{booktabs}
\usepackage{xspace}
\usepackage{tabularx}

\usepackage[utf8]{inputenc} 
\usepackage[T1]{fontenc}    

\usepackage{url}            
\usepackage{booktabs}       
\usepackage{amsfonts}       
\usepackage[subfigure]{tocloft}
\usepackage{nicefrac}       
\usepackage{microtype}      
\usepackage{xcolor}         

\usepackage{graphicx}      
\usepackage[toc,page,header]{appendix}
\usepackage{minitoc}
\usepackage{makecell}
\usepackage{adjustbox}
\usepackage{colortbl}
\definecolor{Gray}{gray}{0.92}
\usepackage{amsmath,bm,bbm}
\usepackage{mathtools}
\usepackage{amsthm}
\usepackage{multirow}
\usepackage{wrapfig}
\usepackage{enumitem}

\usepackage[capitalize,noabbrev]{cleveref}

\usepackage{mdframed}

\theoremstyle{plain}
\definecolor{theoremcolor}{rgb}{0.94, 0.94, 0.94}
\definecolor{examplecolor}{rgb}{1, 1, 1.0}
\mdfsetup{
    backgroundcolor=theoremcolor,
    linewidth=0pt,
}

\newcommand{\myref}[1]{Eq.~\ref{#1}}

\usepackage[capitalize]{cleveref}
\crefname{section}{Sec.}{Secs.}
\Crefname{section}{Section}{Sections}
\Crefname{table}{Table}{Tables}
\crefname{table}{Tab.}{Tabs.}

\newcommand{\ie}[0]{i.e., }
\newcommand{\eg}[0]{e.g., }

\newcommand{\D}{\mathcal{D}}
\newcommand{\T}{\mathcal{T}}
\newcommand{\Z}{\mathcal{Z}}
\newcommand{\abbr}{DRM\xspace} 
\definecolor{Gray}{gray}{0.9}
\definecolor{Gray1}{gray}{0.8}
\definecolor{Gray2}{gray}{0.7}

\newmdtheoremenv[linewidth=0pt,innerleftmargin=4pt,innerrightmargin=4pt]{defn}{Definition}
\newmdtheoremenv[linewidth=0pt,innerleftmargin=4pt,innerrightmargin=4pt]{prop}{Proposition}
\newmdtheoremenv[linewidth=0pt,innerleftmargin=4pt,innerrightmargin=4pt]{assump}{Assumption}
\newmdtheoremenv[linewidth=0pt,innerleftmargin=0pt,innerrightmargin=0pt,backgroundcolor=examplecolor]{example}{Example}
\newmdtheoremenv{corollary}{Corollary}
\newmdtheoremenv[linewidth=0pt,innerleftmargin=4pt,innerrightmargin=4pt]{theo}{Theorem}
\newmdtheoremenv[linewidth=0pt,innerleftmargin=4pt,innerrightmargin=4pt]{lemma}{Lemma}

\AtBeginDocument{%
  \providecommand\BibTeX{{%
    \normalfont B\kern-0.5em{\scshape i\kern-0.25em b}\kern-0.8em\TeX}}}

\copyrightyear{2023} \acmYear{2023} \setcopyright{acmlicensed}\acmConference[KDD '23]{Proceedings of the 29thACM SIGKDD Conference on Knowledge Discovery and Data Mining}{August6--10, 2023}{Long Beach, CA, USA}\acmBooktitle{Proceedings of the 29th ACM SIGKDD Conference on KnowledgeDiscovery and Data Mining (KDD '23), August 6--10, 2023, Long Beach, CA,USA}\acmPrice{15.00}\acmDOI{10.1145/3580305.3599313}\acmISBN{979-8-4007-0103-0/23/08}




\begin{document}

\title{Domain-Specific Risk Minimization for Domain Generalization}

\author{Yi-Fan Zhang}
\authornote{YF Zhang is also affiliated with School of Artificial Intelligence, University of Chinese Academy of Sciences.}
\affiliation{%
  \institution{Institute of Automation}
  \country{Beijing, China}
}
 \email{yifanzhang.cs@gmail.com}
\author{Jindong Wang}
\authornote{Corresponding author: Jindong Wang.}
\affiliation{%
  \institution{Microsoft Research Asia}
   \country{Beijing, China}
   }
   \email{jindong.wang@microsoft.com}

\author{Jian Liang}
\affiliation{%
  \institution{Institute of Automation}
   \country{Beijing, China}}
\email{jian.liang@nlpr.ia.ac.cn}
\author{Zhang Zhang}
\affiliation{%
  \institution{Institute of Automation}
   \country{Beijing, China}}
\email{zzhang@nlpr.ia.ac.cn}
\author{Baosheng Yu}
\affiliation{%
  \institution{The University of Sydney}
   \country{Australia}}
\email{bayu0826@uni.sydney.edu.au}
\author{Liang Wang}
\affiliation{%
  \institution{Institute of Automation}
   \country{Beijing, China}}
\email{wangliang@nlpr.ia.ac.cn}
\author{Dacheng Tao}
\affiliation{%
  \institution{The University of Sydney}
   \country{Australia}}
\email{dacheng.tao@sydney.edu.au}
\author{Xing Xie}
\affiliation{%
  \institution{Microsoft Research Asia}
   \country{Beijing, China}}
\email{xingx@microsoft.com}







\renewcommand{\shortauthors}{Zhang, et al.}

\begin{abstract}
  Domain generalization (DG) approaches typically use the hypothesis learned on source domains for inference on the unseen target domain. However, such a hypothesis can be arbitrarily far from the optimal one for the target domain, induced by a gap termed ``adaptivity gap''. Without exploiting the domain information from the unseen test samples, adaptivity gap estimation and minimization are intractable, which hinders us to robustify a model to any unknown distribution. In this paper, we first establish a generalization bound that explicitly considers the adaptivity gap. Our bound motivates two strategies to reduce the gap: the first one is ensembling multiple classifiers to enrich the hypothesis space, then we propose effective gap estimation methods for guiding the selection of a better hypothesis for the target. The other method is minimizing the gap directly by adapting model parameters using online target samples. We thus propose \textbf{Domain-specific Risk Minimization (\abbr)}. During training, \abbr models the distributions of different source domains separately; for inference, \abbr performs online model steering using the source hypothesis for each arriving target sample. Extensive experiments demonstrate the effectiveness of the proposed \abbr for domain generalization. Code is available at: \url{https://github.com/yfzhang114/AdaNPC}.
\end{abstract}

\begin{CCSXML}
<ccs2012>
   <concept>
       <concept_id>10010147.10010257.10010258.10010262.10010277</concept_id>
       <concept_desc>Computing methodologies~Transfer learning</concept_desc>
       <concept_significance>500</concept_significance>
       </concept>
   <concept>
       <concept_id>10010147.10010257.10010293.10010319</concept_id>
       <concept_desc>Computing methodologies~Learning latent representations</concept_desc>
       <concept_significance>500</concept_significance>
       </concept>
   <concept>
       <concept_id>10010147.10010257.10010293.10010294</concept_id>
       <concept_desc>Computing methodologies~Neural networks</concept_desc>
       <concept_significance>500</concept_significance>
       </concept>
 </ccs2012>
\end{CCSXML}

\ccsdesc[500]{Computing methodologies~Transfer learning}
\ccsdesc[500]{Computing methodologies~Learning latent representations}
\ccsdesc[500]{Computing methodologies~Neural networks}

\keywords{Domain Generalization, Test-time Adaptation, Adaptivity gap}

\maketitle

\section{Introduction}

Machine learning models generally suffer from degraded performance when the training and test data are non-IID (independently and identically distributed).
To overcome the brittleness of classical empirical risk minimization (ERM), there is an emerging trend of developing out-of-distribution (OOD) generalization approaches~\cite{MuaBalSch13,li2018domain}, where models trained on multiple source domains/datasets can be directly deployed on \emph{unseen} target domains.
Various OOD frameworks are proposed, e.g., disentanglement~\cite{zhang2021towards,oh2022learning}, causal invariance~\cite{arjovsky2020invariant,liu2021learning,zhang2022exploring}, and adversarial training~\cite{ganin2016domain,zhang2023free,shi2022pairwise}.

Existing approaches might rely on two strong assumptions.
(i) \textbf{Hypothesis over-confidence.}
Most works directly apply a source-trained hypothesis to \emph{any} unseen target domains~\cite{arjovsky2020invariant,krueger2021out,rame2021fishr} by implicitly assuming that \emph{the training hypothesis space contains an ideal target hypothesis}.
However, the IID and OOD performances are not always positively correlated~\cite{teney2022id}, \ie the optimal hypothesis on source domains might not perform well on any target domains.
The distance between the optimal source and target hypothesis is termed \emph{adaptivity gap}~\cite{dubey2021adaptive}, which is even shown can be arbitrarily large~\cite{chu2022dna}.
(ii) \textbf{Pessimistic adaptivity gap reduction.} Although the adaptivity gap is ubiquitous, it is almost impossible to identify and minimize due to the unavailability of OOD target samples.
As a consequence, there exists no approach that can tackle \emph{all} kinds of distribution shifts at once (e.g., diversity shift in PACS~\cite{li2017deeper} and correlation shift in the Colored MNIST~\cite{arjovsky2020invariant}), but only a specific kind~\cite{ye2022ood}.
In a word, it is almost impossible to robustify a model to arbitrarily unknown distribution shift \emph{without} utilizing the target samples during inference.

To our best knowledge, the two disadvantages are always neglected by the commonly-used domain adaptation and generalization bounds \cite{ben2010theory,albuquerque2020adversarial,zhao2019learning}, which mostly ignore the terms that are related to the target domain.
To this end, we introduce a new generalization bound that independent on the choice of hypothesis space and explicitly considers the adaptivity gap between source and target.
The bound motivates two possible test-time adaptation strategies: the first one is to train specific classifiers for different source domains, and then dynamically ensemble them, which is shown able to enrich the set of the hypothesis space~\cite{domingos1997does}.
The other is to utilize the arriving target samples, namely once a target sample is given, we update the model by its provided target domain information. To summarize, this paper makes the following contributions:

1. \textbf{A novel perspective.} We provide a new generalization bound that does not depend on the choice of hypothesis space and explicitly considers the adaptivity gap between source domains and the target domain. Our bound is shown tighter than the existing one and provides intuition for reweighting methods, test-time adaptation methods, and classifier ensembling methods for good domain generalization performance.

2. \textbf{A new approach.} We propose \abbr method, which consists of two components: (i) During training, \abbr constructs specific classifiers for source domains and is trained by reweighting empirical loss. (ii) During the test, \abbr performs test-time model selection and retraining for each target sample. Thus, the source classifiers are dynamically changed for each target data and we can enrich the support set of the hypothesis space in this way to minimize the adaptivity gap directly.

3. \textbf{Extensive experiments.} We perform extensive experiments on popular OOD benchmarks showing that \abbr (1) achieves very competitive generalization performance on both diversity shift benchmarks and correlation shift benchmarks; (2) beats most existing test-time adaptation methods with a large margin; (3) is orthogonal to other DG methods; (4) reserves strong recognition capability on source domains, and (5) is parameter-efficient and converges even faster than ERM thanks to the structure.

\section{Related work}
\label{sec:related}

\textbf{Domain adaptation and domain generalization}
Domain/Out-of-distribution generalization \cite{wang2022generalizing,zhang2021learning,li2018deep,zhang2022generalizable,lu2022domain,lu2023out} aims to learn a model that can extrapolate well in unseen environments. Representative methods like Invariant Risk Minimization (IRM) \cite{arjovsky2020invariant} concentrate on the objective of extracting data representations that lead to invariant prediction across environments under a multi-environment setting. In this paper, we emphasize the importance of considering the adaptivity
gap and using online target data for adaptation. Without an invariance strategy, the proposed \abbr~can attain superior generalization capacity.

\textbf{Test-time adaptive methods}~\cite{liang2023comprehensive} are recently proposed to utilize target samples. Test-time Training methods need to design proxy tasks during tests such as self-consistence~\cite{zhang2021test}, rotation prediction~\cite{sun2020test} and need extra models; Test-time adaptation methods adjust model parameters based on unsupervised objectives such as entropy minimization~\cite{wang2020tent} or update a prototype for each class~\cite{iwasawa2021test}. Domain-adaptive method~\cite{dubey2021adaptive} needs extra models for adapting to the target domain. Non-Parametric Adaptation~\cite{zhang2023adanpc} needs to store all source domain instances. Our generalization bound indicates that these methods can explicitly reduce the target loss upper bound. In this paper, we propose other ways to perform test-time adaptation, \ie multi-classifier dynamic combination and retraining.

\textbf{Ensemble learning in domain generalization} learns ensembles of multiple specific models for different source domains to improve the generalization ability, \eg domain-specific backbones~\cite{ding2017deep}, domain-specific classifiers~\cite{wang2020dofe}, and domain-specific batch normalization~\cite{segu2020batch}. Domain-specific classifiers are also used in this work; however, empirical results show that directly ensembling multiple classifiers with a uniform weight degrades the performance, and the proposed \abbr achieves superior results in contrast. 

\textbf{Labeling function shift and multi classifiers}. Labeling function shift or correlation shift is not a novel concept and is commonly used in domain adaptation~\cite{zhao2019learning,stojanov2021domain,zhang2013domain} or domain generalization~\cite{ye2022ood}. There are also some studies on DG that are proposed to tackle this problem. CDANN\cite{li2018deep} considers the scenario where both $P(X)$ and $P(Y|X)$ change across domains and proposes to learn a conditional invariant neural network to minimize the discrepancy in $P(X|Y)$ between different domains.~\cite{liu2021domain} explores both the correlation and label shifts in DG and aligns the correlation shift via variational Bayesian inference. The proposed \abbr~is different from these studies because we want the labeling functions $P(Y|X)$ to be more specific to each domain than invariant. 

\section{A Bound by Considering Adaptivity Gap}\label{sec:theory}
\begin{figure}[b]
    \subfigure[]{
    \includegraphics[width=0.3\linewidth]{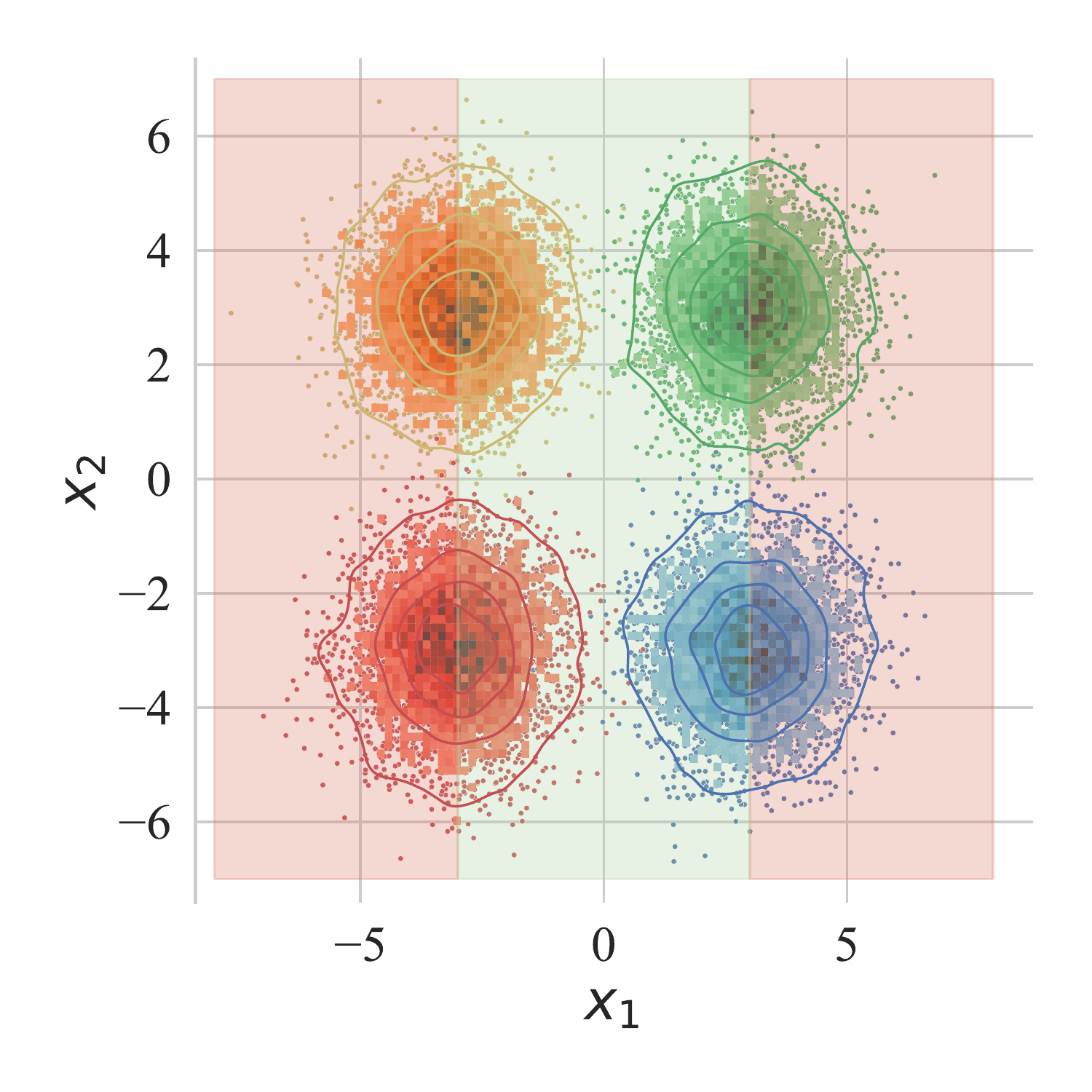}
    }
    \subfigure[]{
    \includegraphics[width=0.3\linewidth]{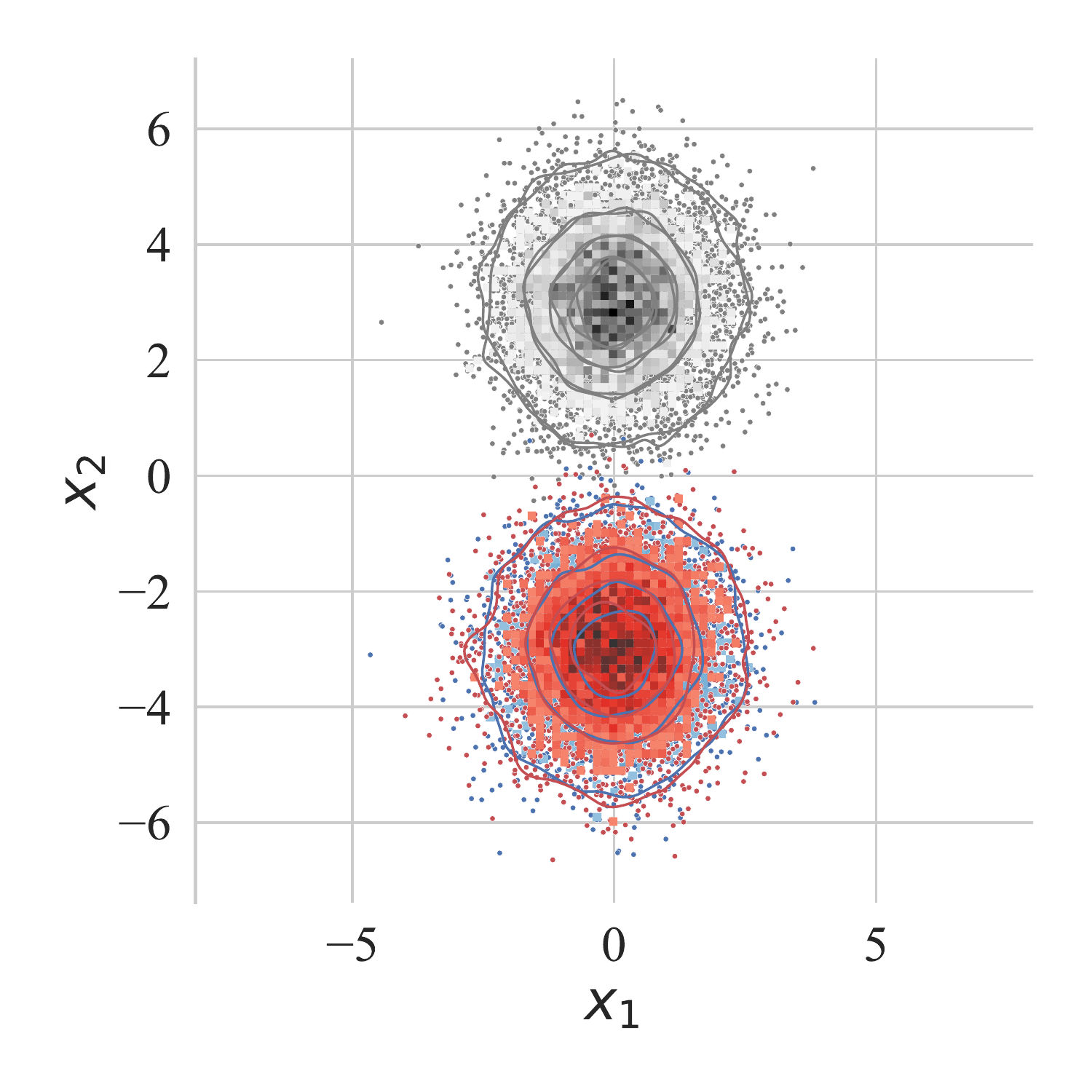}
    }
     \subfigure[]{
    \includegraphics[width=0.3\linewidth]{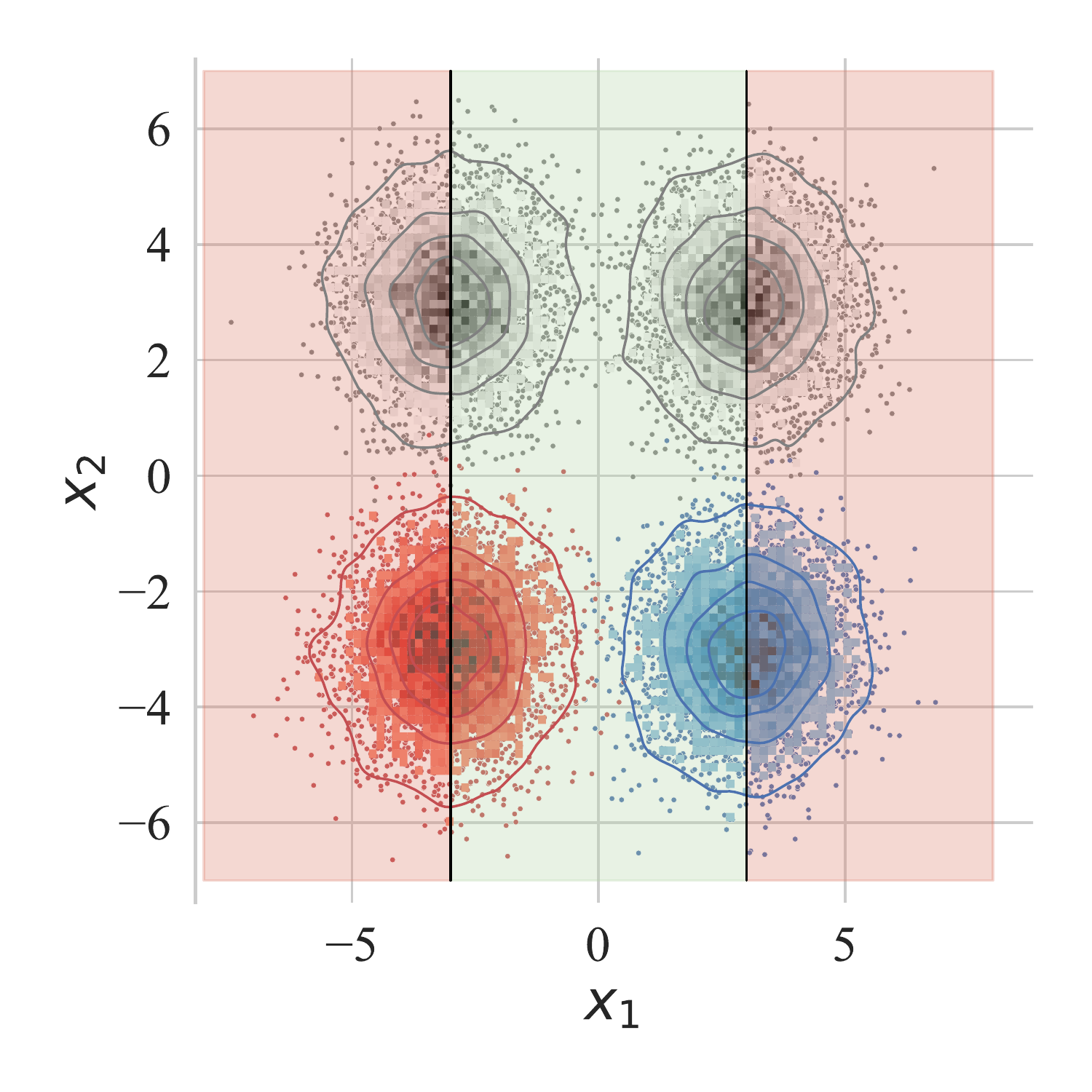}
    \label{fig:examples_drm}
    }
    \caption{A failure case of invariant representations for domain generalization. (a) Four domains in different colors: {\color{orange}orange} ($\mu_o=[-3.0, 3.0]$), {\color{green}green} ($\mu_g=[3.0,3.0]$), {\color{red}red} ($\mu_r=[-3.0,-3.0]$) and {\color{blue}blue} ($\mu_b=[3.0,-3.0]$).  (b) Invariant representations learnt from domain $\D_r$ and $\D_b$ by feature transformation $g(X)=\mathbb{I}_{x_1<0}\cdot(x_1+3)+\mathbb{I}_{x_1>0}\cdot(x_1-3)$. The {\color{gray}grey} color indicates the transformed target domains. (c) The classification boundary learned by \abbr.}
    \label{fig:examples_all}\label{fig:examples}
\end{figure}
\textbf{Problem Formulation.} Let $\mathcal{X,Y,Z}$ denote the input, output, and feature space, respectively.
We use $X,Y,Z$ to denote the random variables taking values from $\mathcal{X,Y,Z}$, respectively.
We focus on the domain generalization setting, where a labeled training dataset  consisting of several different but related training distributions (domains) is given. Formally, $\mathcal{D} = \cup_{i=1}^K\mathcal{D}_i$, where $K$ is the number of domains.
Each $\mathcal{D}_i$ corresponds to a joint distribution $P_i(X,Y)$ with an optimal classifier $f_i:\mathcal{X} \to \{0,1\}$\footnote{Most theories and examples in this paper considers binary classification for easy understanding and can be easily extended to multi-class classification.}.
We assume the output $Y=f_i(X)$ is given by a classifier, $f_i$, which varies from domain to domain. We formally define the classification error, which will be used in our theoretical analysis.

\begin{defn}
\textbf{(classification error.)} Let $g:\mathcal{X} \to \mathcal{Z}$ and $h:\Z\to\{0,1\}$ denote the encoder/feature transformation and the prediction head, respectively.
The error incurred by hypothesis $\hat{f}:=h\circ g$ under domain $\mathcal{D}_i$ can be defined as $\epsilon_i(\hat{f})=\mathbb{E}_{X\sim\mathcal{D}_i}[|\hat{f}(X)-f_i(X)|]$. Given $f_i$ and $\hat{f}$ as binary classification functions, we have 
\begin{equation}
\begin{aligned}
\epsilon_i(\hat{f})=\epsilon_i(\hat{f},f_i)&=\mathbb{E}_{X\sim\mathcal{D}_i}\left[|\hat{f}(X)-f_i(X)|\right]\\
&=P_{X\sim\mathcal{D}_i}(\hat{f}(X)\neq f_i(X)).
\end{aligned}
\label{equ:loss}
\end{equation}
\end{defn}

In real applications, a source domain-trained model will be deployed to classify data samples in an online manner and we can adjust the model using unlabeled online instances~\cite{iwasawa2021test,wang2020tent}. Because the proposed method works fully online and has no requirement for offline unlabelled data, therefore can be compared fairly with existing DG methods~\cite{iwasawa2021test}.

\textbf{Existing analysis on OOD}
Existing popular approaches on OOD focus on learning invariant representations~\cite{li2018domain,ganin2016domain} with the following theoretical intuition.
\begin{prop}
\label{theo1:bound}
\textbf{(Informal)} Denote $\mathcal{\tilde{D}}_i$ as the induced distribution over feature space $\mathcal{Z}$ for every distribution $\mathcal{D}_i$ over raw space. Here we use $\mathcal{H}$ as a hypothesis space defined on feature space, \ie $\mathcal{H}\subseteq \{h:\mathcal{Z}\rightarrow\{0,1\}\}$. The following inequality holds for the risk $\epsilon_{\T}(\hat{f})$ on target domain $\mathcal{D}_\mathcal{T}$ (See appendix \ref{app:bound} for definition of $\mathcal{H}$-divergence $d_{\mathcal{H}}$ and formal derivations):
\begin{equation}
    \epsilon_\T(\hat{f}) \leq \mathcal{O}\left(\lambda_\alpha+{\sum_{i=1}^{K}\epsilon_{i}(\hat{f})}+\sum_{l=1}^{K}\sum_{k=1}^{K} d_{\mathcal{H}}\left(\tilde{\mathcal{D}_{l}},\tilde{\mathcal{D}_{k}}\right)\right),
    \label{lemma:bound}
\end{equation}
where $\lambda_\alpha$ is the optimal hypothesis that achieves the lowest risk under both target and source domains.
\end{prop}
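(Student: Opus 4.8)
The plan is to carry the whole argument into the feature space $\Z$ and control the target risk with two standard tools: the triangle inequality obeyed by the classification error $\epsilon$, and the $\mathcal{H}$-divergence $d_{\mathcal{H}}$, which by construction upper-bounds how much the disagreement between two hypotheses can change when the underlying distribution is swapped. The only algebraic property of the loss I will need is that, because $\epsilon_i(a,b)=P_{X\sim\D_i}(a(X)\neq b(X))$ for binary $a,b$, it is a pseudometric on labeling functions, so $\epsilon_i(a,c)\le\epsilon_i(a,b)+\epsilon_i(b,c)$ for any $a,b,c$.

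First I would introduce the joint-optimal hypothesis $f^\ast$ that attains $\lambda_\alpha$ (the one minimizing the combined source-and-target risk) and split the target error about it, $\epsilon_\T(\hat f)=\epsilon_\T(\hat f,f_\T)\le \epsilon_\T(\hat f,f^\ast)+\epsilon_\T(f^\ast,f_\T)$, where the second summand is at most $\lambda_\alpha$ by definition. The substantive work is to bound the remaining term $\epsilon_\T(\hat f,f^\ast)$, which still lives on the unseen target, purely by source-domain quantities. I would do this by writing it as an average over the $K$ sources and paying a divergence penalty for each swap: $\epsilon_\T(\hat f,f^\ast)=\frac1K\sum_{i}\epsilon_\T(\hat f,f^\ast)\le\frac1K\sum_i\big(\epsilon_i(\hat f,f^\ast)+d_{\mathcal{H}}(\tilde{\D}_\T,\tilde{\D}_i)\big)$, which is legitimate since $\hat f=h\circ g$ and $f^\ast$ both act on $z=g(X)$, so their disagreement event is $\mathcal{H}$-measurable.

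The pairwise divergences then arise from the convex-hull assumption: taking the induced target distribution to lie in the hull of the induced sources, $\tilde{\D}_\T=\sum_k\alpha_k\tilde{\D}_k$ with $\alpha$ in the simplex, the subadditivity of $d_{\mathcal{H}}$ under mixtures gives $d_{\mathcal{H}}(\tilde{\D}_\T,\tilde{\D}_i)\le\sum_k\alpha_k\, d_{\mathcal{H}}(\tilde{\D}_k,\tilde{\D}_i)\le\sum_k d_{\mathcal{H}}(\tilde{\D}_k,\tilde{\D}_i)$ (each $\alpha_k\le1$); summing over $i$ reproduces exactly the double sum $\sum_{l}\sum_{k}d_{\mathcal{H}}(\tilde{\D}_l,\tilde{\D}_k)$. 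A last triangle step $\epsilon_i(\hat f,f^\ast)\le\epsilon_i(\hat f,f_i)+\epsilon_i(f_i,f^\ast)$ turns the first block into $\sum_i\epsilon_i(\hat f)$, and the residuals $\epsilon_i(f_i,f^\ast)$ are again absorbed into $\lambda_\alpha$; folding the $1/K$ weights and the usual constant factor in front of $d_{\mathcal{H}}$ into $\mathcal{O}(\cdot)$ yields the claimed inequality.

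I expect the main obstacle to be the convex-hull step together with its interaction with the divergence. Two points need care: (i) whether $\tilde{\D}_\T$ can genuinely be represented as, or at least well approximated by, a mixture of the $\tilde{\D}_k$ in feature space -- this is an assumption, and any slack contributes an additive term that must be either assumed negligible or folded into $\lambda_\alpha$; and (ii) the subadditivity $d_{\mathcal{H}}(\D,\sum_k\alpha_k\D_k)\le\sum_k\alpha_k d_{\mathcal{H}}(\D,\D_k)$, which follows from the linearity of the expectations inside the $\sup$ defining $d_{\mathcal{H}}$ but has to be stated precisely for the exact form of the divergence used (plain $\mathcal{H}$-divergence versus the $\mathcal{H}\Delta\mathcal{H}$ variant). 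Everything beyond these two points is bookkeeping with the triangle inequality.
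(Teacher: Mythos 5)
Your argument is correct and follows essentially the same route as the paper's Appendix~\ref{app:bound}: decompose around the joint-optimal $f^\ast$ attaining $\lambda_\alpha$, transfer the disagreement term from target to sources by paying $d_{\mathcal{H}}$, and invoke subadditivity of $d_{\mathcal{H}}$ over the convex hull of the induced source distributions to produce the pairwise double sum. The only cosmetic difference is that the paper routes the swap through the $\alpha$-mixture $\tilde{\mathcal{D}}_\alpha$ and its hull projection $\tilde{\mathcal{D}}^\alpha_\T$ (keeping the target-to-hull distance as an explicit residual) rather than averaging over the individual sources with weight $1/K$, and both treatments share the same informality about $\mathcal{H}$ versus $\mathcal{H}\Delta\mathcal{H}$ that you correctly flag.
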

where a feature transformation $g$ is learned such that the induced source distributions on $\mathcal{Z}$ are close to each other, and a prediction head $h$ over $\mathcal{Z}$ is to achieve small empirical errors on source domains.
The bound depends on the risk of the optimal hypothesis $\lambda_\alpha$, namely, we assume the hypothesis space contains an optimal classifier that performs well on both the source and the target. 




\textbf{Adaptivity gap.} The above assumption cannot be guaranteed to hold true under all scenarios and is usually intractable to compute for most practical hypothesis spaces, making the bound conservative and loose.
Besides, even if we have the optimal classifier, it is almost impossible to find the optimal one using given source domains. The reason is that the classifier trained by the average risk across domains can lie far from the optimal classifier for a target domain~\cite{dubey2021adaptive,chu2022dna}, induced by adaptivity gap:\footnote{The adaptivity gap is NOT the same as labeling functions difference~\cite{zhao2019learning}, where the latter measures the difference of two hypotheses: $\min\{{\color{brown}{\mathbb{E}_{\mathcal{D}_i}[|f_i-f_\T|]}},\mathbb{E}_{{\mathcal{D}}_\T}[|f_i-f_\T|]\}$. However, the error of target hypothesis $f_\T$ on the source domain is intractable to estimate and meaningless for DG~\cite{kpotufe2018marginal}. The definition of adaptivity gap directly measures if the source classifier performs well on the target.}
\begin{defn}[Adaptivity gap]
    The adaptivity gap between $\mathcal{D}_i$ and the target domain $\mathcal{D}_\T$ can be formally defined as $\mathbb{E}_{{\mathcal{D}}_\T}[|f_i-f_\T|]$, namely the error incurred by using $f_i$ for inference in  $\mathcal{D}_\T$.
\end{defn}

\textbf{A failure case of marginal invariant representation.} We construct a simple counterexample where invariant representations fail to generalize.
As shown in \cref{fig:examples}, given the following four domains: $\mathcal{D}_o\sim\mathcal{N}([-3,3],I), \mathcal{D}_g\sim\mathcal{N}([3,3],I), \mathcal{D}_r\sim\mathcal{N}([-3,-3],I),\mathcal{D}_b\sim\mathcal{N}([3,-3],I)$, where $X=(x_1,x_2)$ and
\begin{equation}
\begin{small}
\begin{aligned}
    &f_o(X)=\begin{cases}
0 & \text{if } x_1\leq -3 \\
1 & \text{otherwise}
\end{cases}, 
  f_r(X)=\begin{cases}
0 & \text{if } x_1\leq -3 \\
1 & \text{otherwise}
\end{cases},\\
 &  f_g(X)=\begin{cases}
1 & \text{if } x_1\leq 3 \\
0 & \text{otherwise}
\end{cases}, 
  f_b(X)=\begin{cases}
1 & \text{if } x_1\leq 3 \\
0 & \text{otherwise}
\end{cases},
\end{aligned}
\end{small}
\end{equation}
where $I$ indicates the identity matrix.
Then, the optimal hypothesis $f^*(X)=1\text{ iff } x_1\in(-3,3)$ achieves perfect classification on all domains\footnote{Although Gaussian distributions put some mass on parts of the input space where this $f^*$ misclassifies some examples ($x_1>3$ for $\D_r$), the density of these scopes are very small and can be ignored.}.
Let $\D_r,\D_b$ denote source domains and $\D_o,\D_g$ denote target domains. Given hypothesis $\hat{f}:=h\circ g$ where the feature transformation function is $g(X)=\mathbb{I}_{x_1<0}\cdot(x_1+3)+\mathbb{I}_{x_1>0}\cdot(x_1-3)$ in \cref{fig:examples} (b), namely, the invariant representation of $\D_r,\D_b$ is learnt, which is $\D_{rb}=g\circ\D_b=g\circ\D_r=\mathcal{N}([0,-3],I)$. However, the labeling functions $f_r$ of $\D_r$ and $f_b$ of $\D_b$ are just the reverse such that $f_r(X)=1-f_b(X);\forall X\in\D_{rb}$.
In this case, according to~\myref{equ:loss}, we have that $\epsilon_{rb}(\hat{f})$ is equal to: 
\begin{equation}
\begin{small}
\begin{aligned}
&=\epsilon_{r}(h\circ g)+\epsilon_b(h\circ g)\\
&=P_{X\sim g\circ\mathcal{D}_r}(h(X)\neq f_r(X))+P_{X\sim g\circ\mathcal{D}_b}(h(X)\neq f_b(X)) \\
&=1-P_{X\sim \mathcal{D}_{rb}}(h(X)\neq f_b(X))+P_{X\sim \mathcal{D}_{rb}}(h(X)\neq f_b(X))=1
\end{aligned}
\end{small}
\end{equation}

Therefore, the invariant representation leads to large joint errors on all source and target domains for any prediction head $h$ without considering the adaptivity gap.
Motivated by this, we provide a tighter OOD upper bound that considers the adaptivity gap.

\begin{prop}
Let $\{{\mathcal{D}_i},f_i\}_{i=1}^K$ and ${\mathcal{D}}_\T,f_\T$ be the empirical distributions and corresponding labeling function for source and target domain, respectively.
For any hypothesis $\hat{f}\in\mathcal{H}$, given mixed weights $\{\alpha_i\}_{i=1}^K;\sum_{i=1}^K\alpha_i=1,\alpha_i\geq0$, we have:
\begin{equation}
\begin{small}
\begin{aligned}
\epsilon_\T(\hat{f})\leq\sum_{i=1}^K\left(\mathbb{E}_{X\sim{\mathcal{D}}_i}\left[\alpha_i\frac{P_{\T}(X)}{P_{i}(X)}|\hat{f}-f_i|\right]+\alpha_i\mathbb{E}_{{\mathcal{D}}_\T}[|f_i-f_\T|]\right).
\label{theo:2bound}
\end{aligned}
\end{small}
\end{equation}
\end{prop}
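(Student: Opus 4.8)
The plan is to combine the pointwise triangle inequality with an importance-reweighting (change-of-measure) argument, exploiting the fact that the weights $\{\alpha_i\}_{i=1}^K$ form a convex combination. Starting from the definition $\epsilon_\T(\hat{f})=\mathbb{E}_{X\sim{\mathcal{D}}_\T}[|\hat{f}(X)-f_\T(X)|]$ in \myref{equ:loss} and using $\sum_{i=1}^K\alpha_i=1$, I would first rewrite $\epsilon_\T(\hat{f})=\sum_{i=1}^K\alpha_i\,\mathbb{E}_{X\sim{\mathcal{D}}_\T}[|\hat{f}(X)-f_\T(X)|]$, so that each summand can be anchored to its own source labeling function $f_i$.

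The central step is to insert $f_i$ via the triangle inequality applied to the pointwise absolute value: for every $X$ and every $i$ we have $|\hat{f}(X)-f_\T(X)|\le|\hat{f}(X)-f_i(X)|+|f_i(X)-f_\T(X)|$. Taking expectations under ${\mathcal{D}}_\T$ and summing against the weights gives $\epsilon_\T(\hat{f})\le\sum_{i=1}^K\alpha_i\big(\mathbb{E}_{X\sim{\mathcal{D}}_\T}[|\hat{f}-f_i|]+\mathbb{E}_{X\sim{\mathcal{D}}_\T}[|f_i-f_\T|]\big)$. The second expectation is exactly the adaptivity gap $\mathbb{E}_{{\mathcal{D}}_\T}[|f_i-f_\T|]$ of the preceding definition, so it already appears in the target form and requires no further manipulation.

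It remains to convert the first expectation, which is taken over the target distribution, into an expectation over the source distribution ${\mathcal{D}}_i$, since the right-hand side of the claim measures $|\hat{f}-f_i|$ under ${\mathcal{D}}_i$. For this I would apply the standard change-of-measure identity $\mathbb{E}_{X\sim{\mathcal{D}}_\T}[\phi(X)]=\mathbb{E}_{X\sim{\mathcal{D}}_i}\big[\tfrac{P_\T(X)}{P_i(X)}\phi(X)\big]$ with $\phi(X)=|\hat{f}(X)-f_i(X)|$, yielding $\mathbb{E}_{X\sim{\mathcal{D}}_\T}[|\hat{f}-f_i|]=\mathbb{E}_{X\sim{\mathcal{D}}_i}\big[\tfrac{P_\T(X)}{P_i(X)}|\hat{f}-f_i|\big]$. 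Substituting this back and folding the constant $\alpha_i$ inside the expectation of the first term produces precisely the stated bound.

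The main obstacle — indeed the only delicate point — is the validity of the change of measure: the importance ratio $P_\T(X)/P_i(X)$ is well defined only when ${\mathcal{D}}_\T$ is absolutely continuous with respect to ${\mathcal{D}}_i$, \ie $P_i(X)>0$ wherever $P_\T(X)>0$, so that the target support is contained in each source support. I would therefore state this overlap condition explicitly as a standing assumption. Under it, every step is either an equality or a pointwise triangle inequality, so no constants or hidden $\mathcal{O}(\cdot)$ factors are incurred and the inequality holds exactly rather than only order-wise, which is what makes this bound tighter than \cref{theo1:bound}.
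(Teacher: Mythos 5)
Your proposal is correct and follows essentially the same route as the paper's own derivation in Appendix~\ref{app:2bound}: split $\epsilon_\T(\hat{f})$ into a convex combination using $\sum_i\alpha_i=1$, insert $f_i$ via the pointwise triangle inequality, and convert the first expectation to the source domain by the change-of-measure identity. Your explicit statement of the absolute-continuity condition needed for the density ratio $P_\T(X)/P_i(X)$ to be well defined is a detail the paper leaves implicit, but it does not change the argument.
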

The two terms on the right-hand side have natural interpretations: the first term is the weighted source errors, and the second one measures the distance between the labeling functions from the source domain and target domain.
Compared to \myref{lemma:bound}, \myref{theo:2bound} does not depend on $\lambda_\alpha$, i.e., the choice of the hypothesis class $\mathcal{H}$ makes no difference.
More importantly, the new upper bound in \myref{theo:2bound} reflects the influence of adaptivity gaps between each source domain to the target, \ie $\mathbb{E}_{{\mathcal{D}}_\T}[|f_i-f_\T|]$. The most similar generalization bound to us is~\cite{albuquerque2019generalizing}, in Appendix~\ref{sec:app_tight}, we show that the proposed bound is tighter. Although in this work, the density ratio ${P_{\T}(x)}/{P_{i}(x)}$ is ignored and regarded as a constant, it has an interesting connection between reweighting methods.

\textbf{Connect the density ratio to reweighting methods.}
Intuitively, the density ratio stresses the importance of data sample reweighting, where data samples that are more likely from the target domain should have larger weights. Note that estimating ${P_\T(x)}/{P_i(x)}$ directly is intractable and the term is significantly problematic with no constraint. However, we can make some safe assumptions and obtain applicable formulations, which is exactly what distributionally robust optimization (DRO)~\cite{ben2009robust} does\footnote{The assumption used in DRO such as the distance between the source and target distributions is not so far is safe, because if the distance can be arbitrarily significant, almost all existing theories will be loose and no generalization method can work well.}. Specifically, if we restrict the target domain within a $f$-divergence ball (such as Kullback-Leibler divergence) from the training distribution, which is also known as KL-DRO~\cite{hu2013kullback}, then the density ratio will be converted to a reweighting term ${e^{\ell/\tau^*}}/{\mathbb{E}[e^{\ell/\tau^*}]}$ used for training, where $\ell$ indicates the classification error incurred by $(x,y)$ and $\tau^*$ is a hyperparameter. Namely, the reweighting term is actually an approximate estimation of the density ratio. Existing methods~\cite{liu2021just,zhang2022generalizable,sagawa2020distributionally} use similar reweighting terms and our error bound provides a theoretical explanation for why they work well on DG. Existing methods~\cite{liu2021just,zhang2022generalizable,sagawa2020distributionally} use similar reweighting strategies and our error bound provides a theoretical explanation for why they work well on DG  (See Appendix~\ref{sec:app_density} for formal derivation). 

\section{Domain-specific Risk Minimization}

\begin{figure}[t]
\centering
\includegraphics[width=\linewidth]{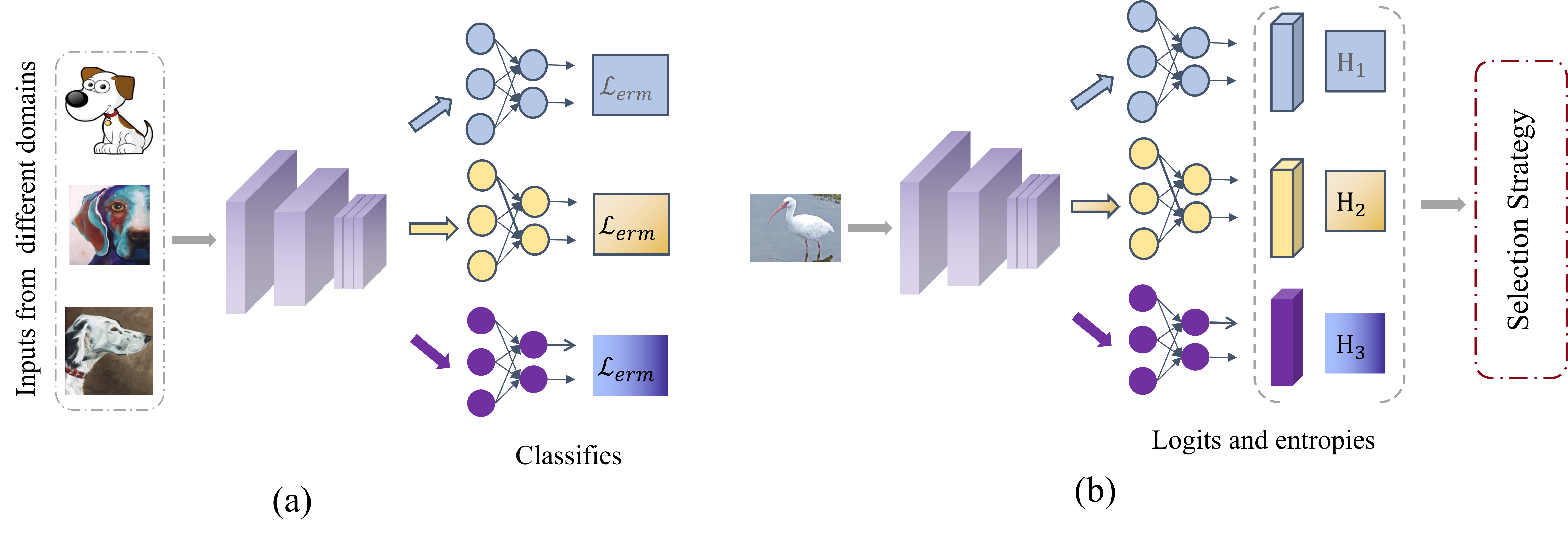}
\caption{\textbf{An illustration of the training and testing pipelines using \abbr.} (a) during training, it jointly optimizes an encoder shared by all domains and the specific classifiers for each individual domain. $\mathcal{L}_{erm}$ indicates the cross-entropy loss function. (b) the new image is first classified by all classifiers and a test-time model selection strategy is applied to generate the final result.}
    \label{fig:model}
\end{figure}

Our error bound in \myref{theo:2bound} suggests a novel perspective on OOD algorithm design. In this paper, we follows the test-time adaptation setting for domain generalization~\cite{iwasawa2021test} and try to utilize the online target samples to minimize the adaptivity gap. However, \myref{theo:2bound} needs to calculate the expectation and the optimal hypothesis function $f_\T$ on the target domain, which are very challenging to obtain. Therefore, we propose a heuristic algorithm, \abbr, which avoids the calculation of intractable terms in \myref{theo:2bound} and approximately minimizes the bound. The main pipeline of the proposed Domain-Specific Risk Minimization (DRM) is shown in~\figurename~\ref{fig:model}.
\subsection{Domain-specific labeling function}
One natural idea is to use \textbf{domain-specific} classifiers $\{\hat{f}_i\}_{i=1}^K$ rather than a shared classifier $\hat{f}$ for source domains.
Each $\hat{f}_i$ is responsible for classification in $\mathcal{D}_i$.
During training, our goal is to minimize $\frac{1}{K}\sum_{i=1}^K\mathbb{E}_{x\sim{\mathcal{D}}_i}\left[|{\color{brown}\hat{f}_i}-f_i|\right]$ by assuming that $K$ training domains are uniformly mixed ($\alpha_i=1/K$).
The generalization results are better with reweighting terms, \eg using GroupDRO~\cite{sagawa2020distributionally}, in the RotatedMNIST dataset, the accuracy of $d=5$ with reweighting terms is $97.3\%$, which is better than $96.8\%$ without reweighting.
We simply ignore the reweighting term in this work since it is not our focus.

Specifically, given $K$ source domains, \abbr~utilizes a \emph{shared} encoder $g$ and a group of prediction head $\{h_i\}_{i=1}^K$ for all domains, respectively.
The encoder is trained by all data samples while each head $h_i$ is trained by images from domain $\mathcal{D}_i$.
It is also possible (but less efficient and accurate) to use specific $g_i$ for each domain.\footnote{Using domain-specific $g_i$ will inevitably increase the computation and memory burden. We observe that $h_i \circ g$ gives an OOD accuracy of $70.1\%$ while the result is only $64.8\%$ for $h_i \circ g_i$ on the Colored MNIST dataset. A possible reason is that a shared encoder $g$ can be seen as an implicit regularization, which prevents the model from overfitting specific domains.}

\subsection{Test-time model selection and adaptation} 

\textbf{Test-time adaptive intuitions from the bound}.
After training, we can get $K$ hypotheses $\hat{f}_i$ that can well approximate source labeling functions.
During testing, our error bound provides two strategies to minimize the second term in the upper bound, \ie $\sum_{i=1}^K\alpha_i\mathbb{E}_{{\mathcal{D}}_\T}[|{f}_i-f_\T|]$, one natural strategy is to find $$\alpha^*=\arg\min \alpha_i\mathbb{E}_{{\mathcal{D}}_\T}[|{f}_i-f_\T|],$$ 
which is termed \textbf{test-time model selection}. The intuition is that if we can find the source domain $\mathcal{D}_{i^*}$ with a labeling function $f_{i^*}$ that minimizes the adaptivity gap $\mathbb{E}_{\mathcal{D}_\T}[|f_{i^*}-f_\T|]$, then we have that $\alpha_{i}=1,\text{iff } i=i^*,\text{otherwise } 0$ will minimize this term.
Second, if we suppose $\hat{f}_i\approx f_i$, then minimizing $\sum_{i=1}^K\mathbb{E}_{{\mathcal{D}}_\T}[|\hat{f}_i-f_\T|]$ will also minimize the bound. The resulting strategy is termed \textbf{test-time retraining}.
Since $f_\T$ is unknown, we can update model parameters by the inferred target pseudo labels or use some unsupervised losses such as entropy minimization.
Note that these two strategies are orthogonal and can be used simultaneously.
In the following, we articulate these two strategies.

\subsubsection{Test-time model selection.} As mentioned above, we can manipulate $\alpha_i$ to affect the second term in our bound: for every test sample $x\in\D_\T$, if we can estimate the adaptivity gap $\{H_i=|f_{i}(x)-f_\T(x)|\}_{i=1}^K$ and choose $i^*=\arg\min_i \{H_i\}_{i=1}^K$. Then $\alpha_{i}=1,\text{iff } i=i^*,\text{otherwise } 0$ makes this term the minimum and the prediction will be $\hat{f}_{i^*}(x)$. The challenge is estimating $\{H_i\}_{i=1}^K$ and we propose two approximations.

\textbf{Similarity Measurement (SM).}
We first reformulate $\alpha_i\mathbb{E}_{{\mathcal{D}}_\T}[|f_i-f_\T|]$ as follows: 
\begin{equation}
\begin{aligned}
&\alpha_i\mathbb{E}_{{\mathcal{D}}_\T}[|f_i-f_\T|]\\
&=\alpha_i\mathbb{E}_{{\mathcal{D}}_\T}\left[|f_i-\mathbb{E}_{\mathcal{D}_i}[f_i]+\mathbb{E}_{\mathcal{D}_i}[f_i]-f_\T|\right]\\
&\leq \alpha_i\left(\mathbb{E}_{{\mathcal{D}}_\T}\left[|f_i-\mathbb{E}_{\mathcal{D}_i}[f_i]|\right]+\mathbb{E}_{{\mathcal{D}}_\T}\left[|\mathbb{E}_{\mathcal{D}_i}[f_i]-f_\T|\right]
\right),
\end{aligned}
\end{equation}
where $f_\T$ is intractable and we then focus on $\mathbb{E}_{{\mathcal{D}}_\T}\left[|f_i-\mathbb{E}_{\mathcal{D}_i}[f_i]|\right]$, which intuitively measures the prediction difference of the given test data $x\in\D_\T$ and the average prediction result in domain $\D_i$.
However, taking the average of the prediction labels might produce ill-posed results\footnote{If all source domains have two data samples with different labels, \eg two different one-hot labels $[0,1],[1,0]$. Then the average prediction result of all source domains will be $[0.5,0.5]$ and have no difference.} and we use $\mathbb{E}_{{\mathcal{D}}_\T}\left[|g-\mathbb{E}_{\mathcal{D}_i}[g]|\right]$ to approximate this term, where we calculate the representation difference between the test sample and the average representation of the domain $\D_i$.
For each $x\in\mathcal{D}_\T$, the estimation $H_i=Dist(g(x), \mathbb{E}_{\mathcal{D}_i}[g])$, \ie the distance between $g(x)$ and the average representation of $\mathcal{D}_i$. 
The $Dist$ function can be any distance metric such as $l_p$-Norm, the negative of cosine similarity, $f-$divergence~\cite{nowozin2016f}, MMD~\cite{li2018domain}, or $\mathcal{A}$-distance~\cite{ben2010theory}.
We use cosine similarity (CSM) and $l_2$-Norm (L2SM) in our experiments for simplicity.

\textbf{Prediction Entropy Measurement (PEM).}
During testing, denote the $K$ individual classification logits as $\{\bar{\textbf{y}}^k\}_{k=1}^K$, where $\bar{\textbf{y}}^k=[y_1^k,...,y_C^k]$, and $C$ is the number of classes.
Given the following assumption: \textit{``the more  confident prediction $h_i\circ g$ makes on $\mathcal{D}_\T$, the more similar $f_i$ and $f_\T$ will be''}.
Then, the prediction entropy of $\bar{\textbf{y}}^k$ can be calculated as $H_k=-\sum_{i=1}^c \frac{y_i^k}{\sum_{j=1}^c y_j^k}\log \frac{y_i^k}{\sum_{j=1}^c y_j^k}$, where the entropy is used as our expected estimation.
In our experiments, we find that the prediction entropy is consistent with domain similarities, which is similar to SM. 

\paragraph{Model Ensembling.}
A one-hot mixed weight is too deterministic and cannot fully utilize all learned classifiers.
\textbf{Softing mixed weights}, on the other hand, can further boost generalization performance and enlarge the hypothesis space, \ie for ERM, we can generate the final prediction as $\sum_{k=1}^K \bar{\textbf{y}}_k {H_k^{-\gamma}}/{\sum_{i=1}^K H_i^{-\gamma}}$, where $H_k^{-\gamma}$ indicates the contribution of each classifier.
We use $-\gamma$, but not $\gamma$ since the smaller the adaptivity gap, the larger the contribution of $f_i$ should be. Specifically, for $\gamma=0$, we then have a uniform combination, \ie $\alpha_i=1/K,\forall i\in[1,2,...,K]$; for $\gamma\rightarrow\infty$, we then have a one-hot weight vector with $\alpha_{i}=1\text{ iff }i=i^*\text{ otherwise } 0$. In experiments, we compare the different selection strategies and PEM generally performs the best, thus \textbf{we use PEM by default}.

\subsubsection{Test-time retraining.}

\begin{wraptable}{r}{4.5cm}
\centering
\small
\vspace{-0.3cm}
\begin{tabular}{ccc}
\toprule
Method & \textbf{Clf}  & \textbf{Full} \\ \midrule
ERM        & 80.3          & 80.3          \\
DRM                    & 83.0            & 83.0            \\\hline
Vanilla retraining      & 83.0            & 83.8          \\
DRM retraining              & \textbf{84.1} & \textbf{84.8} \\
\bottomrule
\end{tabular}%
\caption{Different pseudo label generalization methods.}
\label{tab:pseudo_label}
\vspace{-0.3cm}
\end{wraptable} 
The simplest idea to retrain the model is that, for each prediction head, we use the argmax of the prediction result as pseudo labels and then train the model by cross-entropy loss, which is termed \textit{Vanilla Retraining}.
However, it performs poorly (Table~\ref{tab:pseudo_label}) no matter only tuning the prediction heads (Clf) or the overall model (Full).
Thanks to the domain-specific classifiers, we can produce more reliable pseudo labels.
Specifically, we generate pseudo labels by the weighted mix of predictions by all prediction heads where the weights are just mixed weights in the model selection phase.
We compare these generation strategies on the PACS dataset with `A' as the target.
Table~\ref{tab:pseudo_label} shows that with the proposed pseudo-label generation strategy, the retraining process can be better guided.


\textbf{Remark.} Although our algorithm is mostly heuristic, we show experimentally that by modeling domain-specific labeling functions, \abbr can further reduce source errors (\ie the first term in our upper bound); For the second term, the test-time model selection and retraining reduce the adaptivity gap by enriching hypothesis class and target sample retraining, leading to superior generalization capability. In the following analysis, we show that the proposed \abbr performs well on the counterexample in Section~\ref{sec:theory}.

\abbr~can attain near $0$ source error in the above-mentioned counterexample by using $g(X)=X$ and  
\begin{equation}
h_r(X)=\left\{         
  \begin{array}{cc} 
    0 & ~~x_1\leq -3\\ 
    1 & ~~x_1>-3\\ 
  \end{array}\right.,~~~~h_b(X)=\left\{  
  \begin{array}{cc} 
    1 & ~~x_1\leq 3\\ 
    0 & ~~x_1>3\\ 
  \end{array}\right..\nonumber
\end{equation}
Furthermore, the choice of $g$ is not a matter, and we can easily generalize it to other cases. For example, given $g(X)=\mathbb{I}_{x_1<0}\cdot(x_1+3)+\mathbb{I}_{x_1>0}\cdot(x_1-3)$ for an invariant representation. \abbr~can still attain $0$ source error by using 
\begin{equation}
h_r(X)=\left\{   
  \begin{array}{cc} 
    0 & ~~x_1\leq 0\\ 
    1 & ~~x_1>0\\ 
  \end{array}\right.,~~~~h_b(X)=\left\{  
  \begin{array}{cc} 
    1 & ~~x_1\leq 0\\ 
    0 & ~~x_1>0\\ 
  \end{array}\right.\nonumber
\end{equation}
Taking into account the PEM test-time model selection strategy, \eg in the counterexample, $\D_o$ is more similar to $\D_r$ than to $\D_b$, hence the entropy when $X\in\D_o$ is classified by $h_r$ is less than the entropy classified by $h_b$. In this way,~\figurename~\ref{fig:examples_drm} shows that the learned classification boundaries can achieve test errors near $0$ in both the unseen target domains $\D_o$ and $\D_g$.

\section{Experimental Results}\label{sec:exp}

We first conduct case studies on a popular \textbf{correlation shift} dataset (Colored MNIST).
Then, we compare \abbr with other advanced methods on DG benchmarks (\textbf{diversity shift}). The results verify the argument in the introduction: by utilizing the target data during inference, we can better robustify a model to both distribution shifts.
We also compare \abbr with different test-time adaptive methods with various backbones.
For fair comparisons, We use test-time retraining just when compared to test-time adaptation methods, namely \textbf{\abbr denotes the method wo/ retraining}.

\textbf{Experimental Setup.} We use five popular OOD generalization benchmark datasets: Colored MNIST~\cite{arjovsky2020invariant}, Rotated MNIST~\cite{ghifary2015domain}, PACS~\cite{li2017deeper}, VLCS~\cite{torralba2011unbiased}, and DomainNet~\cite{peng2019moment}. We compare our model with ERM \cite{vapnik1998statistical}, IRM \cite{arjovsky2020invariant}, Mixup \cite{yan2020improve}, MLDG \cite{li2018learning}, CORAL \cite{sun2016deep}, DANN \cite{ganin2016domain}, CDANN \cite{li2018deep}, MTL~\cite{blanchard2021domain}, SagNet~\cite{nam2021reducing}, ARM~\cite{zhang2021adaptive}, VREx~\cite{krueger2021out}, RSC~\cite{huang2020self}, Fish~\cite{shi2022gradient}, and Fishr~\cite{rame2021fishr}. All the baselines in DG tasks are implemented using the codebase of Domainbed \cite{gulrajani2021in}.

\textbf{Hyperparameter search.} Following the experimental settings in \cite{gulrajani2021in}, we conduct a random search of 20 trials over the hyperparameter distribution for each algorithm and test domain. Specifically, we split the data from each domain into $80\%$ and $20\%$ proportions, where the larger split is used for training and evaluation, and the smaller ones are used for select hyperparameters. We repeat the entire experiment twice using different seeds to reduce randomness. Finally, we report the mean over these repetitions as well as their estimated standard error. We observe that the proposed \abbr~does not converge within $5k$ iterations on the DomainNet dataset and we thus train it with an extra $5k$ iterations. 

\textbf{Implementation details.} During training, we use the average of all classifiers' losses as the training loss. To further enlarge the hypothesis space, we can simply add an additional prediction head that is trained by all data samples, namely, we have a total of $K+1$ prediction heads in the test phase, such a simple trick is optional and can bring performance gains on some of our benchmarks.

\textbf{Model selection} in domain generalization is intrinsically a learning problem, and we use test-domain validation, one of the three methods in \cite{gulrajani2021in}. This strategy is an oracle-selection one since we choose the model maximizing the accuracy on a validation set that follows the distribution of the test domain.

\textbf{Model architectures.}
Following \cite{gulrajani2021in}, we use as encoders ConvNet for RotatedMNIST (detailed in Appdendix D.1 in \cite{gulrajani2021in}) and ResNet-50 for the remaining datasets.

See Appendix~\ref{sec:data_detail} for dataset details.

\begin{table*}[t]
\centering
\adjustbox{max width=\textwidth}{%
\setlength{\tabcolsep}{4.0pt}
\begin{tabular}{@{}ccccccccc@{}}
\toprule                                                                & \multicolumn{2}{c}{\textbf{+90\%} ($d=0$) }                                                            & \multicolumn{2}{c}{\textbf{+80\%} ($d=1$)}                                                            & \multicolumn{2}{c}{\textbf{-90\%} ($d=2$)}                                                           & \multicolumn{2}{c}{Avg}                                                           \\ 
Method                                                      &  train &  test &  train &  test &  train &  test &  train &  test \\\midrule
ERM                                                            & \textbf{86.1$\pm$3.9}                           & 71.8$\pm$0.4                         & 83.6$\pm$0.5                           & 72.9$\pm$0.1                         & 87.5$\pm$3.4                           & 28.7$\pm$0.5                         & 85.7                                     & 57.8                                   \\
IRM                                                            & 78.2$\pm$9.5                           & 72.0$\pm$0.1                         & 70.6$\pm$9.1                           &  72.5$\pm$0.3 & 85.3$\pm$4.7                           & \textbf{58.5$\pm$3.3}                         & 78.0                                       & 67.7                                   \\\rowcolor{Gray}
\textbf{DRM}                                                           & 81.8$\pm$9.8                           & \textbf{86.7$\pm$2.4}                         & 90.2$\pm$0.2                           & 80.6$\pm$0.2                         & 88.0$\pm$4.5                           & 43.1$\pm$7.5                         & 86.7                                     & 70.1                                   \\\rowcolor{Gray}
\textbf{+CORAL}                                                        & 83.4$\pm$8.6                           & 85.3$\pm$2.3                         & 9\textbf{1.6$\pm$0.7}                           & \textbf{80.7$\pm$0.2}                         & \textbf{89.4$\pm$4.9 }                          & 47.2$\pm$3.6                         & \textbf{88.1}                                     & \textbf{71.1}                                   \\\hline
 RG         & 50                                       & 50                                     & 50                                       & 50                                     & 50                                       & 50                                     & 50                                       & 50                                     \\
 OIM & 75                                       & 75                                     & 75                                       & 75                                     & 75                                       & 75                                     & 75                                       & 75                                     \\
 ERM (gray)         &  84.8$\pm$2.7   &  73.9$\pm$0.3 & 84.3$\pm$1.4                           & 73.7$\pm$0.4                         & 83.4$\pm$2.3                           & 73.8$\pm$0.7                         & 84.2                                     & 73.8                                   \\ \bottomrule
\end{tabular}}
\caption{\textbf{Accuracies ($\%$) of different methods for the Colored MNIST synthetic task.} OIM (optimal invariant model) and RG (random guess) are hypothetical mechanisms.}
\label{tab:case_mnist}
\end{table*}

\begin{table*}[t]
\centering
\adjustbox{max width=\textwidth}{%
\begin{tabular}{lcccccc}
\toprule
Method        & \textbf{CMNIST}     & \textbf{RMNIST}     & \textbf{VLCS}             & \textbf{PACS}              & \textbf{DomainNet}        & \textbf{Avg}              \\
\midrule
ERM~\cite{vapnik1998statistical}                       & 57.8 $\pm$ 0.2            & 97.8 $\pm$ 0.1            & 77.6 $\pm$ 0.3            & 86.7 $\pm$ 0.3            & 41.3 $\pm$ 0.1            & 72.2                      \\
IRM~\cite{arjovsky2020invariant}                       & 67.7 $\pm$ 1.2            & 97.5 $\pm$ 0.2            & 76.9 $\pm$ 0.6            & 84.5 $\pm$ 1.1            & 28.0 $\pm$ 5.1            & 70.9                      \\
GDRO~\cite{sagawa2020distributionally}                  & 61.1 $\pm$ 0.9            & 97.9 $\pm$ 0.1            & 77.4 $\pm$ 0.5            & 87.1 $\pm$ 0.1            & 33.4 $\pm$ 0.3            & 71.4                      \\
Mixup~\cite{yan2020improve}                     & 58.4 $\pm$ 0.2            & 98.0 $\pm$ 0.1            & 78.1 $\pm$ 0.3            & 86.8 $\pm$ 0.3            & 39.6 $\pm$ 0.1            & 72.2                      \\
CORAL~\cite{sun2016deep}                      & 58.6 $\pm$ 0.5            & 98.0 $\pm$ 0.0            & 77.7 $\pm$ 0.2            & 87.1 $\pm$ 0.5            &  41.8 $\pm$ 0.1            & 72.6                      \\
DANN~\cite{ganin2016domain}                      & 57.0 $\pm$ 1.0            & 97.9 $\pm$ 0.1            & 79.7 $\pm$ 0.5            & 85.2 $\pm$ 0.2            & 38.3 $\pm$ 0.1            & 71.6                     \\
CDANN~\cite{li2018deep}                     & 59.5 $\pm$ 2.0            & 97.9 $\pm$ 0.0            & 79.9 $\pm$ 0.2            & 85.8 $\pm$ 0.8            & 38.5 $\pm$ 0.2            & 72.3                      \\
MTL~\cite{blanchard2021domain}                       & 57.6 $\pm$ 0.3            & 97.9 $\pm$ 0.1            & 77.7 $\pm$ 0.5            & 86.7 $\pm$ 0.2            &  40.8 $\pm$ 0.1            & 72.1                      \\
SagNet~\cite{nam2021reducing}                    & 58.2 $\pm$ 0.3            & 97.9 $\pm$ 0.0            & 77.6 $\pm$ 0.1            & 86.4 $\pm$ 0.4            &  40.8 $\pm$ 0.2            & 72.2                     \\
ARM~\cite{zhang2021adaptive}                       & 63.2 $\pm$ 0.7            & 98.1 $\pm$ 0.1            & 77.8 $\pm$ 0.3            & 85.8 $\pm$ 0.2            &  36.0 $\pm$ 0.2            & 72.2                      \\
VREx~\cite{krueger2021out}                      & 67.0 $\pm$ 1.3            & 97.9 $\pm$ 0.1            & 78.1 $\pm$ 0.2            & 87.2 $\pm$ 0.6            & 30.1 $\pm$ 3.7            & 72.1                     \\
RSC~\cite{huang2020self}                        & 58.5 $\pm$ 0.5            & 97.6 $\pm$ 0.1            & 77.8 $\pm$ 0.6            & 86.2 $\pm$ 0.5            &  38.9 $\pm$ 0.6            & 71.8                     \\
Fish~\cite{shi2022gradient}                       & 61.8 $\pm$ 0.8            & 97.9 $\pm$ 0.1            & 77.8 $\pm$ 0.6            & 85.8 $\pm$ 0.6            &  \textbf{43.4 $\pm$ 0.3}            & 73.3                     \\
Fishr~\cite{rame2021fishr}                       & 68.8 $\pm$ 1.4            & 97.8 $\pm$ 0.1            & 78.2 $\pm$ 0.2            & 86.9 $\pm$ 0.2            &  41.8 $\pm$ 0.2            & 74.7                     \\\rowcolor{Gray}
\textbf{\abbr}~                 &  70.1 $\pm$ 2.0       & 98.1 $\pm$ 0.2     & \textbf{80.5 $\pm$ 0.3} & \textbf{88.5 $\pm$ 1.2}   & 42.4 $\pm$ 0.1  &   75.9        \\
\rowcolor{Gray}
\textbf{DRM+CORAL}                 & \textbf{71.1 $\pm$ 1.3}       & \textbf{98.3 $\pm$ 0.1}    &  79.5 $\pm$ 2.4 & 88.4 $\pm$ 0.9    &   { 42.7} $\pm$ 0.1    &   \textbf{76.0}    \\
\bottomrule
\end{tabular}
}
\caption{Out-of-distribution generalization performance. No retraining is applied for a fair comparison.}
\label{tab:ood}
\end{table*}

\begin{figure*}[t]
    \subfigure[]{
    \includegraphics[width=0.23\textwidth]{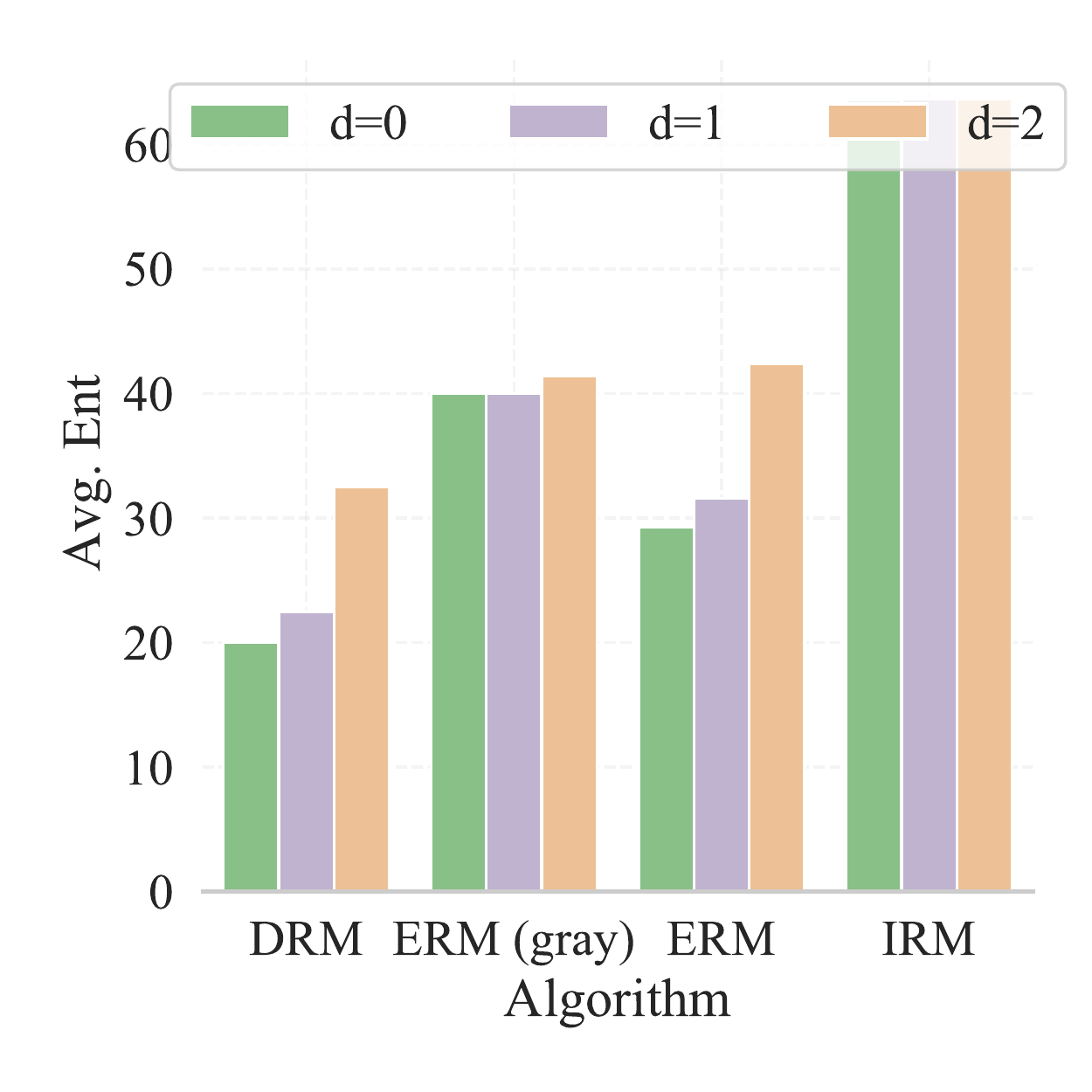}
\label{fig:entropy_cmnista}
    }
    \subfigure[]{
    \includegraphics[width=0.23\textwidth]{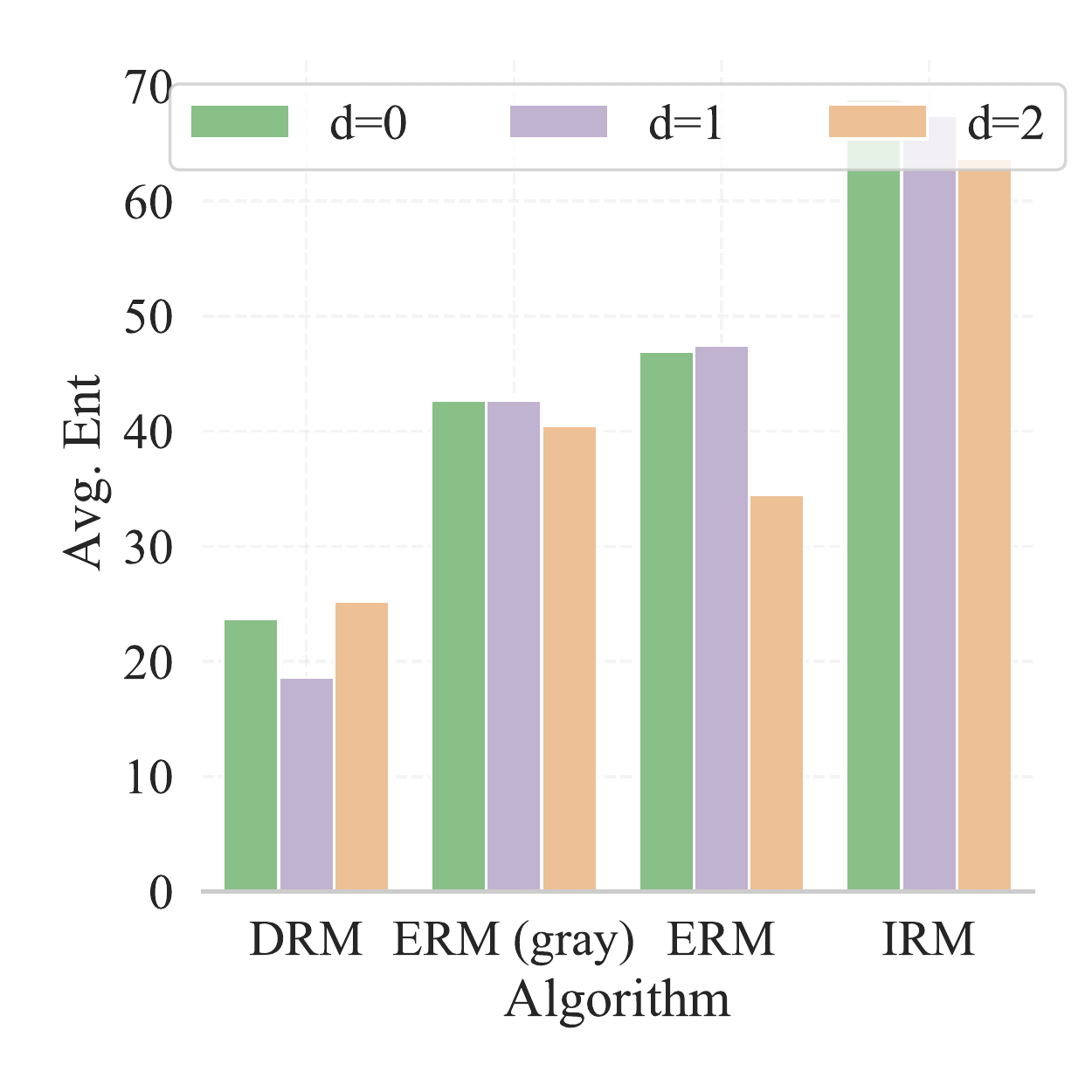}
\label{fig:entropy_cmnistb}
    }
     \subfigure[]{
    \includegraphics[width=0.23\textwidth]{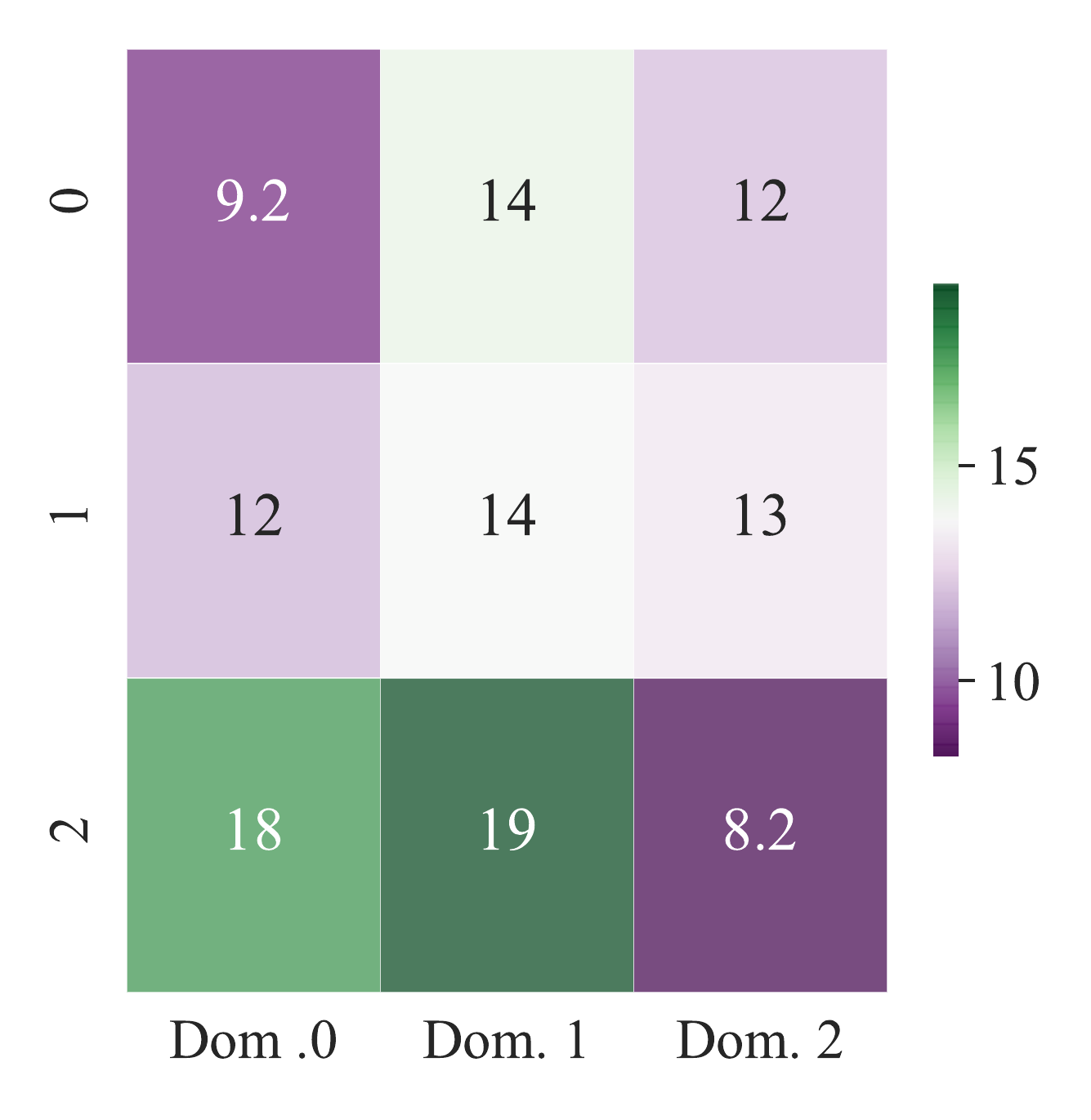}
\label{fig:entropy_cmnistc}
    }
     \subfigure[]{
    \includegraphics[width=0.23\textwidth]{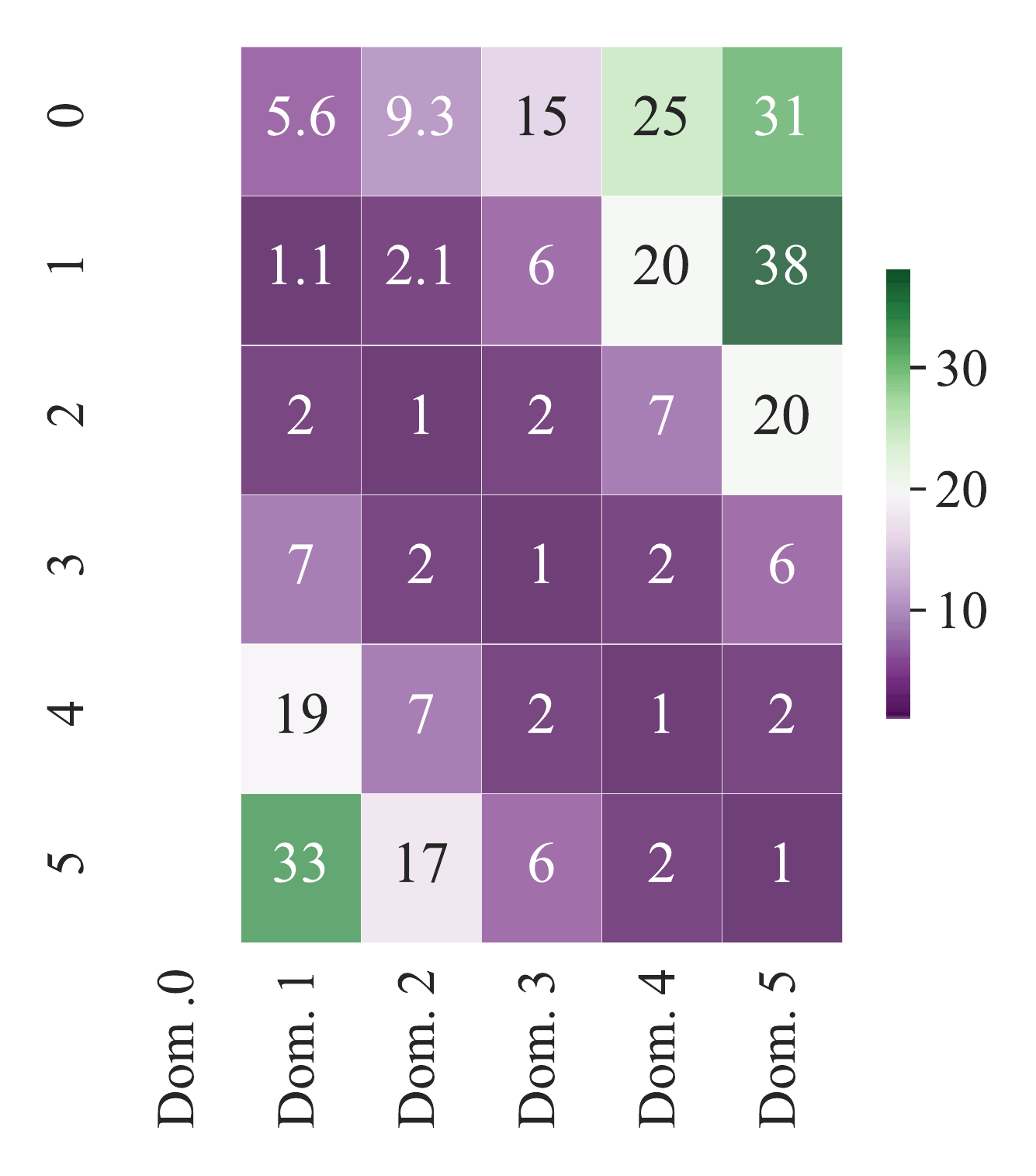}
\label{fig:heatmapsa}
    }
    \caption{\textbf{The entropy of different predictions.} (a) Training domain $\{0,1\}$ and testing domain $\{2\}$. (b) The average of training/testing domains $\{0,1\}$/$\{2\}$, $\{0,2\}$/$\{1\}$, and $\{1,2\}$/$\{0\}$. (c) Domain-classifier correlation matrix, the value $v_{ij}$ is the entropy of predictions incurred by predicting samples in the domain $i$ with classifier $j$. Dom.$i$ indicates the classifier for the domain $d=i$. (d) Domain-classifier correlation matrices on Rotated MNIST.} 
    \label{fig:entropy_cmnist}
\end{figure*}

\subsection{Case studies on correlation shift datasets}
\label{sec:exp_cminst}

In the following, we conduct thorough experiments and analyze a popular correlation shift benchmark, \ie the ColoredMNIST dataset~\cite{arjovsky2020invariant}.
It constructs a binary classification problem based on the MNIST dataset (digits 0-4 are class one and 5-9 are class two). Digits in the dataset are either colored red or green, and there is a strong correlation between color and label but the correlations vary across domains. For example, green digits have a $90\%$ chance of belonging to class 1 in the first domain $+90\%(d=0)$, and a $10\%$ chance of belonging to class 1 in the third domain $-90\%(d=2)$.

\textbf{\abbr has superior generalization ability on the dataset with correlation shift.}
As shown in Table~\ref{tab:case_mnist}, ERM achieves high accuracy in training domains, but lower chance accuracy in the test domain due to its reliance on spurious correlations.
IRM~\cite{arjovsky2020invariant} forms a trade-off between training and testing accuracy. An ERM model trained on only gray images, \ie ERM (gray), is perfectly invariant by construction and attains a better tradeoff than IRM. The upper bound performance of invariant representations (OIM) is a hypothetical model that not only knows all spurious correlations but also has no modeling capability limit.
For averaged generalization performance, \abbr, without any invariance regularization, outperforms IRM by a large margin ($>2.4\%$).
In addition, \textit{the source accuracy of \abbr is even higher than ERM and significantly higher than IRM and OIM}.
Note that \abbr is complementary to invariant learning-based methods, where the incorporation of CORAL can further boost both training and testing performances.
Though the Colored MNIST dataset is a good indicator to show the model capacity for avoiding spurious correlation, these spurious correlations are unrealistic and utopian.
Therefore, when testing on large DG benchmarks, ERM outperforms IRM.
Unlike them, \abbr not only performs well on the semi-synthetic dataset but also attains state-of-the-art performance on large benchmarks.

\textbf{PEM implicitly reduces prediction entropy and the entropy-based strategy performs well in finding a proper labeling function for inference.}
The prediction entropy is often related to the fact that more confident predictions tend to be correct~\cite{wang2020tent}. 
\figurename~\ref{fig:entropy_cmnista} shows that the entropy in target domain ($d=2$) tends to be greater than the entropy in source domains, where the source domain with stronger spurious correlations ($d=1$) also has larger entropy than easier one ($d=0$).
Fortunately, with the entropy minimization strategy, we can find the most confident classifier for a given data sample, and \abbr can reduce the prediction entropy (\figurename~\ref{fig:entropy_cmnistb}).
To further analyze the entropy minimization strategy, we visualize the domain-classifier correlation matrix in \figurename~\ref{fig:entropy_cmnistc}, where the entropy between the domain and its classifier is the minimal, verifying the efficacy of the PEM strategy.

\subsection{Results on general OOD benchmarks}

\textbf{OOD results.}
The average OOD results on all benchmarks are shown in Table~\ref{tab:ood}.
We observe consistent improvements achieved by \abbr compared to existing algorithms. The results indicate the superiority of \abbr in real-world diversity shift datasets. See the Appendix for multi-target domain generalization and detailed performance on every domain.

\textbf{In-distribution results.} Current DG methods ignore the performance of source domains since they focus on target results. However, source domain performance is also of great importance in applications~\cite{yang2021generalized}, i.e., the in-distribution performance.
We then show the in-distribution performances of VLCS and PACS in Table~\ref{tab:idp-vlcs-pacs}.
\abbr achieves comparable or superior performance in the source domains compared to ERM and beats IRM by a large margin, which indicates that \abbr achieves satisfying in- and out-distribution performance.

\begin{table}[]
    \centering

\adjustbox{max width=\columnwidth}{%
    \begin{tabular}{lccccc}
    \toprule
    \multirow{2}{*}{Method}  & 
\multicolumn{5}{c}{\textbf{VLCS}} \\

  & C & L & S & V & Avg \\\midrule
  ERM & 78.2$\pm$3.3  & 87.8$\pm$9.0  & 86.3$\pm$10.2  & 83.3$\pm$11.6  & 83.9\\
  IRM         & 76.9$\pm$2.9  & \textbf{88.2$\pm$8.9}  & 85.3$\pm$9.8  & 77.3$\pm$1.0  &   81.9 \\
  \textbf{DRM}  &  \textbf{78.5$\pm$2.9} & 87.2$\pm$9.2  &  \textbf{87.3$\pm$9.0} &  \textbf{84.0$\pm$10.9} & \textbf{84.3}\\\midrule
 \multirow{2}{*}{Method}& \multicolumn{5}{c}{\textbf{PACS}} \\ 
 & A & C & P & S & Avg \\\midrule
 ERM & 96.7$\pm$0.3  & 96.4$\pm$1.5  & \textbf{95.3$\pm$1.2}  &  \textbf{96.3$\pm$0.1}& 96.2 \\
 IRM  & 95.9$\pm$1.6  & 94.2$\pm$2.5 &  94.3$\pm$1.0 & 94.5$\pm$1.8 & 94.7   \\
 \textbf{DRM}  & \textbf{96.9$\pm$0.3} &  \textbf{96.4$\pm$1.3} &  95.2$\pm$0.9 & 96.1$\pm$0.6 & \textbf{96.2} \\ \bottomrule
    \end{tabular}
}
\caption{In-distribution performance on VLCS and PACS.}
\label{tab:idp-vlcs-pacs}
\end{table}
\begin{table}[t]
\resizebox{\linewidth}{!}{%
\begin{tabular}{@{}lcc|l|lcc@{}}
\toprule
Method & BSZ=32 & BSZ &  & Method & BSZ=32 & BSZ=8 \\ \midrule
ResNet50 & 83.98 & 83.98 &  & {\color{brown}ResNet50} & 83.98 & 83.98 \\
PLClf & 85.63 & 85.55 &  & DRM & 86.57 & 86.57 \\
PLFull & 86.50 & 85.88 &  & +Retrain Cls & 87.90 & 87.83 \\
SHOT & 86.53 & 85.85 &  & +Retrain Full & 89.30 & 89.33 \\\cline{5-7}
SHOTIM & 86.40 & 85.68 &  & {\color{brown}ResNet18} & 79.98 & 79.98 \\
T3A & 86.23 & 86.00 &  & DRM & 80.30 & 80.30 \\
ResNet50-BN & 83.18 & 83.18 &  & +Retrain Cls & 82.95 & 82.18 \\
TentClf & 84.15 & 84.15 &  & +Retrain Full & 84.70 & 84.35 \\\cline{5-7}
TentNorm & 85.60 & 84.00 &  & {\color{brown}ViT-B16} & 87.10 & 87.10 \\\cline{0-2} 
DRM & 86.57 & 86.57 &  & DRM & 87.85 & 87.85 \\
+Retrain Cls & 87.90 & 87.83 &  & +Retrain Cls & 90.08 & 90.08 \\
+Retrain Full & 89.30 & 89.33 &  & +Retrain Full & 90.95 & 90.85 \\ \bottomrule
\end{tabular}%
}
\caption{ (Left) Comparison of our method and existing test-time adaptation methods on PACS. (Right){Domain generalization accuracy with different backbone networks on PACS.} The reported number is the average generalization performance over P, A, C, S four domains.}\label{tab:backbons}\label{tab:adapt_comp}
\end{table}
\begin{table*}[th]
\centering
\begin{tabular}{@{}cccccc@{}}
\toprule                                                      
Method          & A                       & C                       & P                       & S                       & Avg           \\ \midrule
DRM w/ Uniform weight & 81.2 $\pm$ 2.2          & 71.2 $\pm$ 1.2          & 93.7 $\pm$ 0.3          & 78.6 $\pm$ 1.5          & 81.2          \\
DRM w/ CSM             & 83.0 $\pm$ 2.1          & 74.6 $\pm$ 2.5          & 95.6 $\pm$ 0.8          & 80.4 $\pm$ 1.2           & 83.4          \\
DRM w/ NNM          & 85.5 $\pm$ 2.4          & 76.8 $\pm$ 2.0          & 96.6 $\pm$ 0.4          & 81.8 $\pm$ 1.5          & 85.2          \\
DRM w/ L2SM               & 87.7 $\pm$ 1.7          & 80.0 $\pm$ 0.5          & 96.0 $\pm$ 1.6          & 82.1 $\pm$ 1.2          & 86.5          \\
\textbf{DRM w/ PEM}    & \textbf{88.3 $\pm$ 2.9} & \textbf{80.1 $\pm$ 0.8} & \textbf{97.0 $\pm$ 0.5} & \textbf{80.9 $\pm$ 0.7} & \textbf{86.6} \\ \bottomrule
\end{tabular}
\caption{Comparison of different test-time model selection strategies on the PACS dataset.}
\label{tab:test-time}
\vspace{-0.2cm}
\end{table*}
\begin{table*}[t]
\centering
\small
\setlength{\tabcolsep}{7.25pt}
\begin{tabular}{@{}crrrrrr@{}}
\toprule
\multirow{2}{*}{Method} &
  \multicolumn{2}{c}{Colored MNIST} &
  \multicolumn{2}{c}{Rotated MNIST} &
  \multicolumn{2}{c}{PACS} \\ \cmidrule(l){2-7} 
\multicolumn{1}{l}{\multirow{-2}{*}{}} &
  Time (sec) &
  \# Params (M) &
  Time (sec) &
  \# Params (M) &
  \multicolumn{1}{c}{Time (sec)} &
  \# Params (M) \\ \cmidrule(r){1-7}
ERM   & 71.02  & 0.3542 & 168.32 & 0.3546 & 2,717.5 & 22.4326             \\
IRM   & 101.49 & 0.3542 & 236.80  & 0.3546 &                      2,786.3      & 22.4326             \\
ARM   & 161.51 & 0.4573 & 360.69 & 0.4562 &                       6,616.9     & 22.5398 \\
FISH  & 137.17 & 0.3542 & 251.76 & 0.3546 &                      23,849.5      & 22.4326             \\\rowcolor{Gray}
\textbf{DRM}   & 83.39  & 0.3544 & 203.15 & 0.3595 &                          2,895.1  & 22.46 \\ \bottomrule
\end{tabular}
\caption{Comparisons of different methods on the number of parameters and training time.}\label{tab:params_all}
\vspace{-0.3cm}
\end{table*}



\textbf{Comparison with test-time adaptative methods.}
For fair comparisons, following~\cite{iwasawa2021test}, the base models (ERM and \abbr) are trained only on the default hyperparameters and without the fine-grained parametric search. Because~\cite{gulrajani2021in} omits the BN layer from ResNet when fine-tuning on source domains, we cannot simply use BN-based methods on the ERM baseline.
For these methods, their baselines are additionally trained on ResNet-50 with BN.
Models with the highest IID accuracy are selected and all test-time adaptation methods are applied to improve generalization performance.
The baselines include Tent~\cite{wang2020tent}, T3A~\cite{iwasawa2021test}, pseudo labeling (PL)~\cite{lee2013pseudo}, SHOT~\cite{liang2020we}, and SHOT-IM~\cite{liang2020we}.
For methods that use gradient backpropagation, we implement both the update of the prediction head (Clf) and the full model (Full).
Results in Table~\ref{tab:adapt_comp} show that: (i) Simply retraining the classifier or the full model by its own prediction is comparable to existing methods; (ii) Tent~\cite{wang2020tent} is sensitive to batch size but the proposed \abbr is not; (iii) The performance of DRM without retraining attains comparable results compared to existing methods, and when incorporated by the proposed retraining method, the performance beats all baselines by a large margin.

\textbf{Results of various backbones.} We conduct experiments with various backbones in Table~\ref{tab:backbons}, including ResNet-50, ResNet-18, and Vision Transformers (ViT-B16).
\abbr achieves consistent performance improvements compared to ERM.
Specifically, \abbr improves $5.3\%, 4.7\%$, and $3.9\%$ for ResNet-50, ResNet-18, and ViT-B16 with evaluation batch size (BSZ) $32$, respectively.

\textbf{Multi-target domain generalization.} IRM~\cite{arjovsky2020invariant} introduces specific conditions for an upper bound on the number of training environments required such that an invariant optimal model can be obtained, which stresses the importance of several training environments. In this paper, we reduce the training environments on the Rotated MNIST from five to three. As shown in Table~\ref{tab:multi-target}, as the number of training environments decreases, the performance of IRM decreases significantly (\eg the average accuracy from $97.5\%$ to $91.8\%$), and the performance on the most challenging domains $d=\{0,5\}$ declines the most ($94.9\%\rightarrow80.9\%$ and $95.2\%\rightarrow91.1\%$). In contrast, both ERM and \abbr~retain high generalization performances while \abbr~outperforms ERM on domains $d=\{0,5\}$. 

\begin{table*}[t]
\centering
\adjustbox{max width=\textwidth}{%
\setlength{\tabcolsep}{8.25pt}
\begin{tabular}{@{}lccccccc@{}}
\toprule
\multicolumn{8}{c}{\textbf{Rotated MNIST}} \\
 &\multicolumn{3}{c}{\textbf{Target domains $\{0,30,60\}$}} &\multicolumn{3}{c}{\textbf{Target domains $\{15,45,75\}$}} & \\\midrule
Method   & 0    & 30   & 60    & 15    & 45   & 75                   & Avg                  \\
\hline
ERM & 96.0$\pm$0.3&   98.8$\pm$0.4  &  98.7$\pm$0.1  &  98.8$\pm$0.3   &   \textbf{99.1$\pm$0.1}  & 96.7$\pm$0.3 & 98.0  \\
IRM &  80.9$\pm$3.2   & 94.7$\pm$0.9 & 94.3$\pm$1.3 &  94.3$\pm$0.8   &   95.5$\pm$0.5  & 91.1$\pm$3.1 & 91.8  \\
\textbf{DRM} & \textbf{97.1$\pm$0.2}& \textbf{98.8$\pm$0.2} & \textbf{98.9$\pm$0.3} & \textbf{98.8$\pm$0.1} & 98.8$\pm$0.0 & \textbf{98.1$\pm$0.7} & \textbf{98.4 }\\
 \bottomrule
\end{tabular}}
\caption{Generalization performance on multiple unseen target domains.}
\vspace{-0.4cm}
\label{tab:multi-target}
\end{table*}

\subsection{Ablation Studies and Analysis}
\label{sec:analysis}

\textbf{Different model selection strategies.} Here we also conduct another baseline termed \textbf{Neural Network Measurement (NNM).} To fully utilize the modeling capability of the neural network, we propose estimating $\alpha_i\mathbb{E}_{{\mathcal{D}}_\T}[|f_i-f_\T|]$ by NN. Specifically, during training, a domain discriminator is trained to classify which domain is each image from. During test, for $x\in\D_\T$, the prediction result of the discriminator will be $\{d_i\}_{i=1}^K$, and $\{H_i=-d_i\}_{i=1}^K$ is used as the estimation.
We compare all the proposed strategies and a simple ensembling learning baseline, which uses a uniform weight for classifier ensembling.
Table~\ref{tab:test-time} (left) shows that the simple ensembling method works poorly in all domains.
In contrast, the proposed methods achieve consistent improvements and PEM generally performs best.

\textbf{Correlation matrix.} From the correlation matrices, we find that (i) the entropy of the predictions between one source domain and its corresponding classifier is minimal. 
(ii) In the target domain, the classifiers cannot attain a very low entropy as on the corresponding source domains.
(iii) The entropy of the predictions has a certain correlation with domain similarity. For example, in~\figurename~\ref{fig:heatmapsa}, the classifier for domain $d=1$ (with rotation angle $15^\circ$) achieves the minimum entropy in the unseen target domain $d=0$ (no rotation). As the rotation angle increases, the entropy also increases. This phenomenon also occurs in other domains. Refer to the appendix for more analysis.

\begin{figure*}[t]
    \subfigure[]{
    \includegraphics[width=0.23\linewidth]{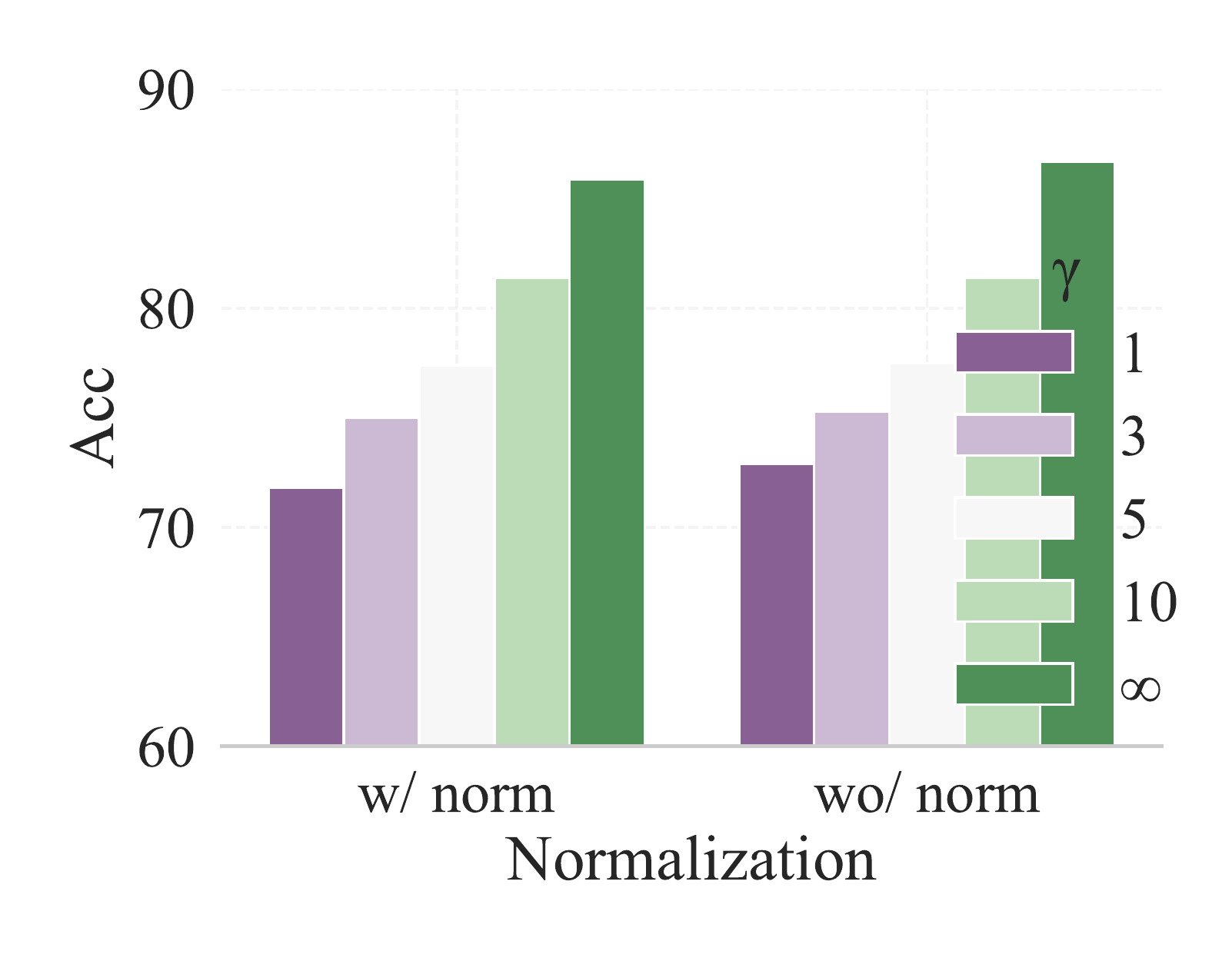}
    \vspace{-4mm}
    }
    \subfigure[]{
    \includegraphics[width=0.23\linewidth]{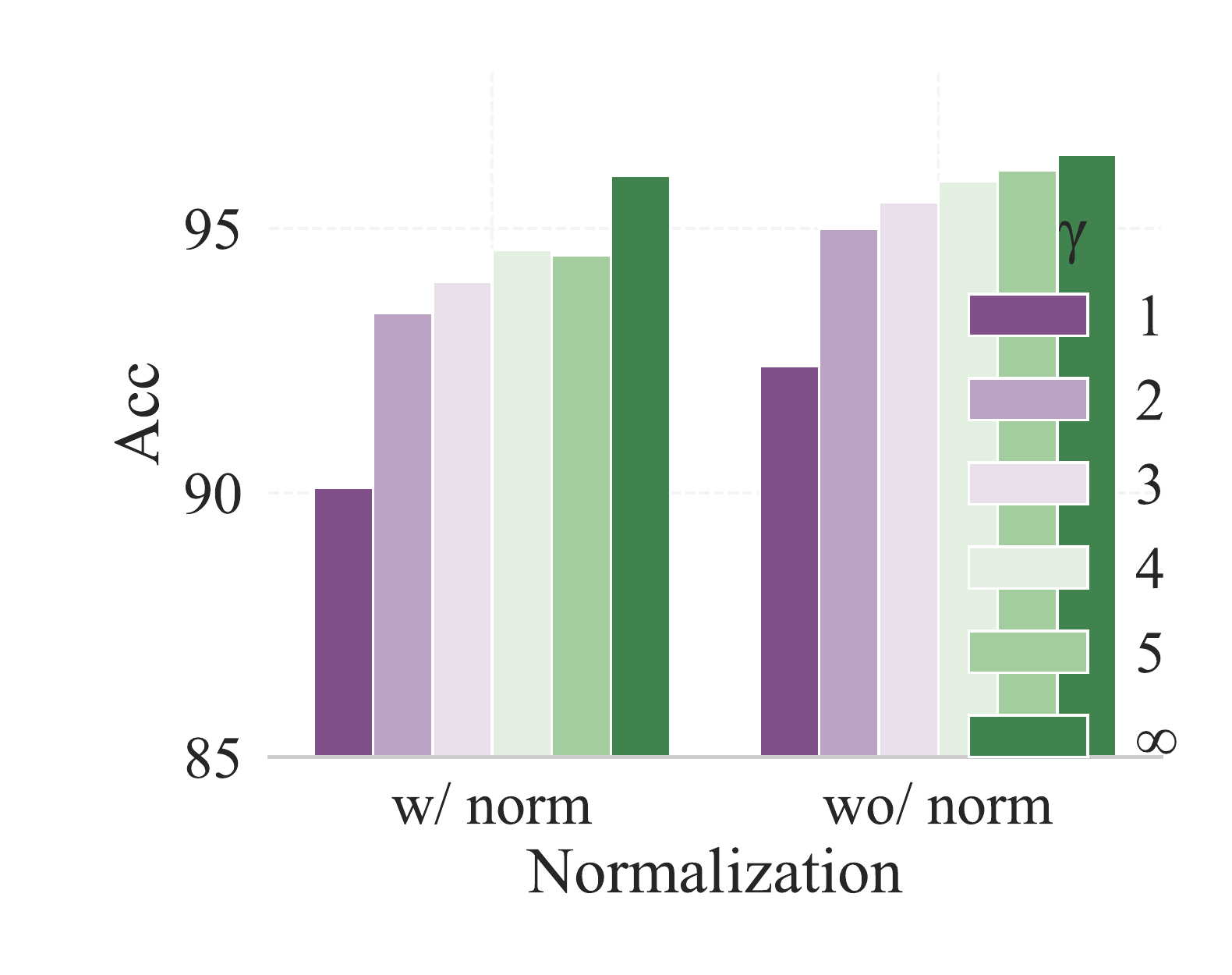}
    \vspace{-2mm}
    }
     \subfigure[]{
    \includegraphics[width=0.23\linewidth]{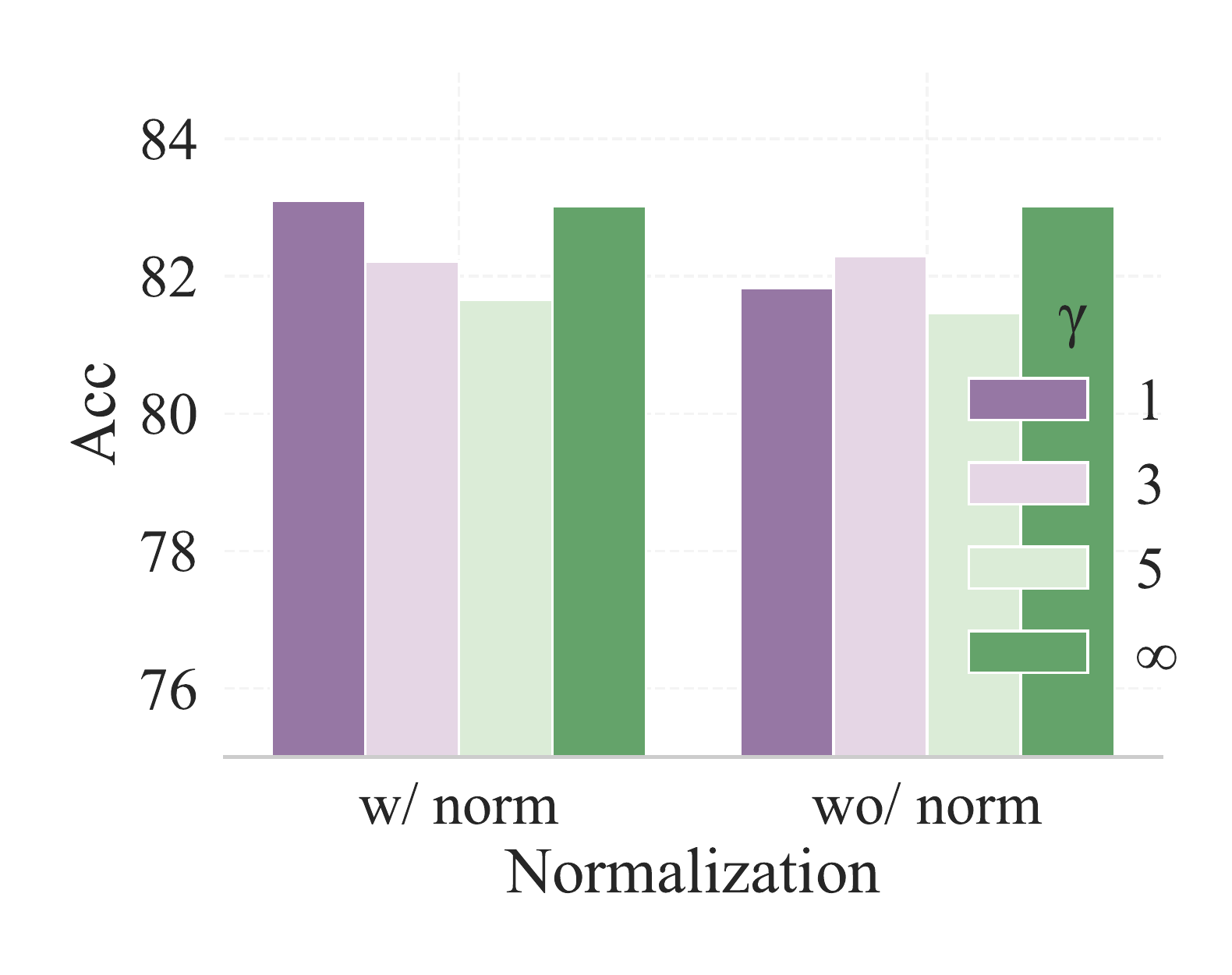}
    \vspace{-4mm}
    }
    \subfigure[]{
     \includegraphics[width=0.23\linewidth]{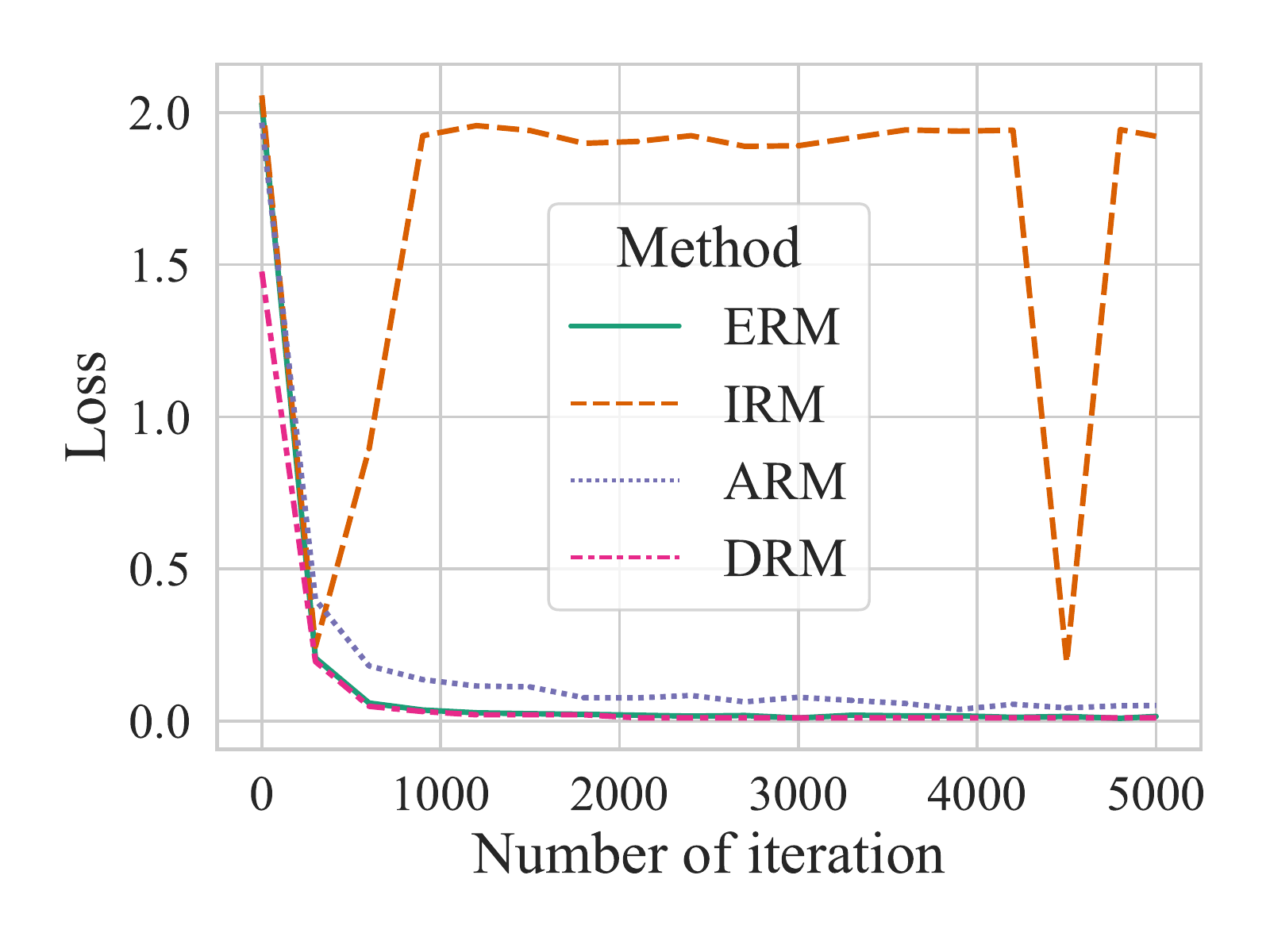}
     \label{fig:convergence}
    \vspace{-4mm}
    }
    \caption{Different mixing weights on the (a) Colored MNIST (target domain $d=2$) (b) Rotated MNIST (target domain $d=0$), and (c) PACS datasets (target domain $d=3$). Given a classification vector $\mathbf{\bar{y}}=[y_1,y_2,...,y_c]$, $c$ is the number of classes, performing normalization means that let $y_i=y_i/\sum_{j=1}^cy_j$ before mixing. (d) Loss curves of different baselines. }
    \label{fig:com_classifiers}
\end{figure*}
\begin{wraptable}{r}{3.3cm}
\vspace{-0.3cm}
\centering
\small
\begin{tabular}{@{}ccc@{}}
\toprule
Dataset & DRM & ERM \\ \midrule
CMNIST & 1.91 & 1.29 \\
RMNIST & 3.31 & 1.26 \\
PACS & 10.74 & 9.81 \\
VLCS & 10.74 & 8.64 \\
DomainNet & 11.15 & 9.34 \\ \bottomrule
\end{tabular}%
\caption{Comparison between inference times of one data sample in milliseconds.}
\label{tab:infer_tme}
\end{wraptable} 
\textbf{\abbr has comparable model complexity to existing DG methods.} As shown in Table~\ref{tab:test-time} (right), methods that require manipulating gradients (Fish~\cite{shi2022gradient}) or following the meta-learning pipeline (ARM~\cite{zhang2021adaptive}) have a much slower training speed compared to ERM. The proposed \abbr, without the need for aligning representations~\cite{ganin2016domain}, matching gradient~\cite{shi2022gradient}, or learning invariant representations~\cite{arjovsky2020invariant}, has a training speed that is faster than most existing DG methods, especially on small datasets RotatedMNIST. The training speed of \abbr is slower than ERM due to the training of additional classifiers. As the number of domains/classes increases or the feature dimension increases, the training time of \abbr will increase accordingly, 
however, \abbr is always comparable to ERM and much faster than Fish and ARM (Table ~\ref{tab:params_all}). For model parameters, since all classifiers in our implementation are just a linear layer, the total parameters of \abbr is similar to ERM and much less than existing methods such as CDANN and ARM.

\textbf{\abbr has comparable inference time to ERM.} The time cost of prediction for one data sample in the RotatedMNIST, PACS, VLCS, and DomainNet datasets are shown in Table.\ref{tab:infer_tme}. \abbr will not introduce significant computational overhead even on the DomainNet dataset, which has the most number of domains.

\textbf{Softing mixed weights}
\figurename~\ref{fig:com_classifiers} shows ablation experiments of the hyperparameter $\gamma$ on three benchmarks. Different benchmarks show different preferences on $\gamma$. For easy benchmarks, Rotated MNIST and Colored MNIST, softening mixed weights is needless. The reason behind this phenomenon can be found in \figurename~\ref{fig:heatmapsa}, the optimal classifier for the target domain $0$ of the Rotated MNIST is exactly the classifier $1$ and the prediction entropies will increase as the rotation angle increases. Hence, selecting the most approximate classifier based on the minimum entropy selection strategy is enough to attain superior generalization results. However, prediction entropies on other larger benchmarks, \eg VLCS, are not so regular as on the Rotated MNIST. On realistic benchmarks, a mixing of classifiers can bring some improvements. Besides, normalization, which is a method to reduce classification confidence\footnote{Given two classification results from 2 classifiers $[2.1, 0.4,0.5], [0.3,0.6,0.1]$ and assume the weights are all $1$. The result is $[2.4, 1.0, 0.6]$ with normalization and $[1.0, 0.73, 0.27]$ without normalization. The former is more confident than the latter.}, is also needless for semi-synthetic datasets (Rotated MNIST and Colored MNIST) and valuable for realistic benchmarks. 


\textbf{\abbr brings faster convergence speed.} The training dynamics of \abbr~and several baselines on PACS dataset are shown in~\figurename~\ref{fig:convergence}, where $d=0$ is the target domain. IRM is unstable and hard to converge. ARM follows a meta-learning pipeline and converges slowly. In contrast, \abbr converges even faster than ERM.
\vspace{-0.1cm}
\section{Concluding Remarks}\label{sec:conclusion}

We theoretically and empirically study the importance of the adaptivity gap for domain generalization.
Inspired by our theory, we propose a new domain generalization algorithm, \abbr to eliminate the negative effects brought by the adaptivity gap.
\abbr uses different classifier combinations for different test samples and beats existing DG methods and TTA methods by a large margin. 

Existing TTA methods for domain generalization need to adapt model parameters continually, therefore, the prediction behavior cannot be thoroughly tested in advance, causing some ethical concerns~\cite{iwasawa2021test}. \abbr alleviates this important issue because model retraining is not necessary. One potential drawback is the additional parameters incurred by the multi-classifiers structure, which can be reduced by advanced techniques and model designs, \eg varying coefficient technique~\cite{nie2021vcnet,hastie1993varying}. 

\section*{Acknowledgements}
This work was partially funded by the National Key R$\&$D Program of China (2022ZD0117901), and National Natural Science Foundation of China (62236010, and 62141608).

\bibliographystyle{ACM-Reference-Format}
\balance
\bibliography{sample-sigconf}

\appendix
\clearpage
\newpage

\section{Proofs of Theoretical Statements}\label{app:proofs}
To complete the proofs, we begin by introducing some necessary definitions. 

\begin{defn}
($\mathcal{H}$-divergence~\cite{2006Analysis}). Given two domain distributions $\mathcal{D}_S,\mathcal{D}_\T$ over $X$, and a hypothesis class $\mathcal{H}$, the \textit{$\mathcal{H}$-divergence} between $\mathcal{D}_S,\mathcal{D}_\T$ is $ d_{\mathcal{H}}(\mathcal{D}_S,\mathcal{D}_\T)=2\sup_{f\in\mathcal{H}}\mid\mathbb{E}_{x\sim\mathcal{D}_S}[f(x)=1]-\mathbb{E}_{x\sim\mathcal{D}_\T}[f(x)=1]\mid$.
\label{define1}
\end{defn}

\subsection{Derivation and Explanation of the Learning Bound in \myref{lemma:bound}}
\label{app:bound}


Let $f^*=\arg\min_{\hat{f}\in\mathcal{H}}\left(\epsilon_\T(\hat{f})+\sum_{i=1}^K\epsilon_i(\hat{f})\right)$, and let $\lambda_\T$ and $\lambda_i$ be the errors of $f^*$ with respect to $\mathcal{D}_\T$ and $\mathcal{D}_i$ respectively. Notice that $\lambda_\alpha=\lambda_\T+\sum_{i=1}^K\lambda_i$.  Similar to~\cite{2006Analysis} (Theorem 1), we have
\begin{equation}
\begin{aligned}
    \epsilon_\mathcal{T}(\hat{f})&\leq\lambda_\T+P_{\mathcal{D}_\T}[\mathcal{Z}_h\triangle\mathcal{Z}_h^*] \\ & \leq  \lambda_\T+P_{\mathcal{D}_\alpha}[\mathcal{Z}_h\triangle\mathcal{Z}_h^*]+|P_{\mathcal{D}_\alpha}[\mathcal{Z}_h\triangle\mathcal{Z}_h^*]-P_{\mathcal{D}_\T}[\mathcal{Z}_h\triangle\mathcal{Z}_h^*]|\\
    & \leq \lambda_\T+P_{\mathcal{D}_\alpha}[\mathcal{Z}_h\triangle\mathcal{Z}_h^*]+d_{\mathcal{H}}(\tilde{D}_\T,\tilde{\mathcal{D}}_\alpha)\\
    & \leq \lambda_\T+P_{\mathcal{D}_\alpha}[\mathcal{Z}_h\triangle\mathcal{Z}_h^*]+d_{\mathcal{H}}(\tilde{\mathcal{D}}^\alpha_\T,\tilde{\mathcal{D}}_\alpha)+d_{\mathcal{H}}(\tilde{\mathcal{D}}_\T,\tilde{\mathcal{D}}^\alpha_\T)\\
    & \leq \lambda_\T+\sum_{i=1}^K\lambda_i+{\sum_{i=1}^{K}\alpha_i\epsilon_{i}(\hat{f})}+{d_{\mathcal{H}}(\tilde{\mathcal{D}}^\alpha_\T,\tilde{\mathcal{D}}_\alpha)} +{d_{\mathcal{H}}(\tilde{\mathcal{D}}_\T,\tilde{\mathcal{D}}^\alpha_\T)}\\
    & \leq \lambda_\alpha+{\sum_{i=1}^{K}\alpha_i\epsilon_{i}(\hat{f})}+{d_{\mathcal{H}}(\tilde{\mathcal{D}}^\alpha_\T,\tilde{\mathcal{D}}_\alpha)} +{d_{\mathcal{H}}(\tilde{\mathcal{D}}_\T,\tilde{\mathcal{D}}^\alpha_\T)},
\end{aligned}
\label{proof:boundlabel}
\end{equation}
The forth inequality holds because of the triangle inequality. We provide the explanation for our bound in \myref{proof:boundlabel}. The second term is the empirical loss for the convex combination of all source domains. The third term corresponds to ``To what extent can the convex combination of the source domain approximate the target domain''. The minimization of the third term requires diverse data or strong data augmentation such that the unseen distribution lies within the convex combination of source domains. For the fourth term, the following equation holds for any two distributions $D_\T',D_\T''$, which are the convex combinations of source domains \cite{albuquerque2020adversarial}
\begin{equation}
    \begin{aligned}
        d_{\mathcal{H}[D_\T',D_\T'']}\leq \sum_{l=1}^{K}\sum_{k=1}^{K} \alpha_l\alpha_kd_{\mathcal{H}}[\mathcal{D}_{l},\mathcal{D}_{k}]
    \end{aligned}
\end{equation}
The upper bound will be minimized when $d_{\mathcal{H}}[\mathcal{D}_{l},\mathcal{D}_{k}]=0,\forall \, l,k\in\{1,...,K\}$. That is, projecting the source domain data into a feature space where the source domain labels are hard to distinguish.

\subsection{Derivation the Learning Bound in \myref{theo:2bound}}
\label{app:2bound}
\begin{prop}
Let $\{{\mathcal{D}_i},f_i\}_{i=1}^K$ and ${\mathcal{D}}_\T,f_\T$ be the empirical distributions and the corresponding labeling function. For any hypothesis $\hat{f}\in\mathcal{H}$, given mixed weights $\{\alpha_i\}_{i=1}^K;\sum_{i=1}^K\alpha_i=1,\alpha_i\geq0$, we have:
$$\epsilon_\T(\hat{f})\leq \sum_{i=1}^K\left(\mathbb{E}_{X\sim{\mathcal{D}}_i}\left[\alpha_i\frac{P_{\T}(X)}{P_{i}(X)}|\hat{f}-f_i|\right]+\alpha_i\mathbb{E}_{{\mathcal{D}}_\T}[|f_i-f_\T|]\right)$$
\end{prop}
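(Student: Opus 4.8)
The plan is to start directly from the definition $\epsilon_\T(\hat{f}) = \mathbb{E}_{X\sim\mathcal{D}_\T}[|\hat{f}(X) - f_\T(X)|]$ and to introduce each source labeling function $f_i$ via the triangle inequality. Since the mixing weights satisfy $\sum_{i=1}^K \alpha_i = 1$, I first rewrite the target error as the convex combination $\epsilon_\T(\hat{f}) = \sum_{i=1}^K \alpha_i\, \mathbb{E}_{X\sim\mathcal{D}_\T}[|\hat{f}(X) - f_\T(X)|]$ without changing its value. This manufactures one copy of the target error per source domain, which is exactly what is needed to match the per-domain structure of the right-hand side.

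Next, for each index $i$ I apply the triangle inequality $|\hat{f}(X) - f_\T(X)| \leq |\hat{f}(X) - f_i(X)| + |f_i(X) - f_\T(X)|$ inside the expectation. This splits the bound into two groups: the first group $\sum_i \alpha_i\, \mathbb{E}_{\mathcal{D}_\T}[|\hat{f} - f_i|]$ compares the hypothesis to each source labeling function under the target distribution, while the second group $\sum_i \alpha_i\, \mathbb{E}_{\mathcal{D}_\T}[|f_i - f_\T|]$ is already \emph{exactly} the adaptivity-gap term appearing in the statement, so no further manipulation is needed there.

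The key remaining step is a change of measure on the first group. I rewrite $\mathbb{E}_{X\sim\mathcal{D}_\T}[|\hat{f}(X) - f_i(X)|] = \int |\hat{f}(x) - f_i(x)|\, P_\T(x)\, dx$ and multiply and divide by $P_i(x)$ to obtain $\mathbb{E}_{X\sim\mathcal{D}_i}\!\left[\tfrac{P_\T(X)}{P_i(X)}\, |\hat{f}(X) - f_i(X)|\right]$. Folding the weight $\alpha_i$ inside the expectation then produces precisely the first summand of the claimed bound, and combining it with the adaptivity-gap group completes the argument.

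The only technical subtlety worth flagging is the validity of this importance-sampling step: it requires absolute continuity, i.e. $P_i(x) > 0$ wherever $P_\T(x) > 0$, so that the density ratio $P_\T/P_i$ is well defined on the support of the target. This is the standard assumption underlying density-ratio reweighting (and is consistent with the paper's later treatment of $P_\T/P_i$ as a bounded reweighting factor), so it should be stated explicitly; beyond this, the proof is an elementary combination of the triangle inequality and a change of variables, with no genuine obstacle.
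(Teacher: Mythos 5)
Your proof is correct and follows essentially the same route as the paper's own derivation in Appendix~\ref{app:2bound}: decompose the target error as a convex combination via $\sum_i\alpha_i=1$, insert $f_i$ through the triangle inequality, and apply the change-of-measure trick to convert the target expectation of $|\hat{f}-f_i|$ into a density-ratio-weighted source expectation. Your explicit remark about the absolute-continuity requirement for the ratio $P_\T/P_i$ is a reasonable addition that the paper leaves implicit.
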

\begin{proof}
\begin{equation*}
\begin{aligned}
\epsilon_\T(\hat{f})&=\epsilon_\T(\hat{f},f_\T)=\mathbb{E}_{X\sim {\mathcal{D}}_\T}[|\hat{f}(X)-f_\T(X)|]\\
&=\sum_{i=1}^K\alpha_i\mathbb{E}_{X\sim {\mathcal{D}}_\T}[|\hat{f}(X)-f_\T(X)|]\\
&= \sum_{i=1}^K\alpha_i\left(\mathbb{E}_{X\sim {\mathcal{D}}_\T}[|\hat{f}(X)-f_i(X)+f_i(X)-f_\T(X)|]\right)\\
&\leq \sum_{i=1}^K\alpha_i\left(\mathbb{E}_{X\sim {\mathcal{D}}_\T}[|\hat{f}(X)-f_i(X)|]+\mathbb{E}_{X\sim {\mathcal{D}}_\T}[|f_i(X)-f_\T(X)|]\right)
\end{aligned}
\end{equation*}
The above proof is based on absolute value inequality. After that, we ignore $X$ in the hypothesis $f(X)\rightarrow f$ for simplicity and apply the change-of-measure trick.
\begin{equation*}
\begin{aligned}
\epsilon_\T(\hat{f})&\leq\sum_{i=1}^K\alpha_i\left(\mathbb{E}_{{\mathcal{D}}_\T}[|\hat{f}-f_i|]+\mathbb{E}_{{\mathcal{D}}_\T}[|f_i-f_\T|]\right)\\
&=\sum_{i=1}^K\alpha_i\left(\int|\hat{f}-f_i|P_{\T}(X)d_X+\mathbb{E}_{{\mathcal{D}}_\T}[|f_i-f_\T|]\right)\\
&=\sum_{i=1}^K\alpha_i\left(\int|\hat{f}-f_i|P_{i}(X)\frac{P_{\T}(X)}{P_{i}(X)}d_X+\mathbb{E}_{{\mathcal{D}}_\T}[|f_i-f_\T|]\right)\\
&=\sum_{i=1}^K\left(\mathbb{E}_{X\sim{\mathcal{D}}_i}\left[\alpha_i\frac{P_{\T}(X)}{P_{i}(X)}|\hat{f}-f_i|\right]+\alpha_i\mathbb{E}_{{\mathcal{D}}_\T}[|f_i-f_\T|]\right),
\end{aligned}
\end{equation*}
which completes our proof.
\end{proof}

\subsection{Comparison of the proposed bound to existing bound.}\label{sec:app_tight}
Before we derive our main result, we first introduce some necessary theorems. For simplicity, given hypothesis $\hat{f},\hat{f}'\in\mathcal{H}$ and label function $f_S$ for $\mathcal{D}_S$, denote $\epsilon_S(\hat{f},\hat{f}')=\mathbb{E}_{{\tilde{\mathcal{D}}}_S}[|\hat{f}-\hat{f}'|]$ and $\epsilon_S(\hat{f})=\epsilon_S(\hat{f},f_S)=\mathbb{E}_{\tilde{\mathcal{D}}_S}[|\hat{f}-f_S|]$, we have
\begin{theo}
(Lemma 4.1 and Theorem 4.1 in~\cite{albuquerque2019generalizing}.) Given twodistributions in the image space $<{\mathcal{D}}_S,f_S>,<{\mathcal{D}}_\T,f_\T>$ and $\hat{f}\in\mathcal{H}$, we have
\begin{equation}
\left|\epsilon_S(f_S,f_\T)-\epsilon_\T(f_S,f_\T)\right|\leq d_\mathcal{H}({\mathcal{D}}_S, {\mathcal{D}}_\T).
\label{equ:inequ_dh}
\end{equation}
The error in the target domain can then be bounded by 
\begin{equation}
    \epsilon_\T(\hat{f})\leq\epsilon_S(\hat{f})+d_\mathcal{H}({\mathcal{D}}_S, {\mathcal{D}}_\T)+\min\{\epsilon_S(f_S,f_\T),\epsilon_\T(f_S,f_T)\},
\label{equ:bound_con}
\end{equation}
where the result is based mainly on the inequality in \myref{equ:inequ_dh}.
\label{theo:error_da}
\end{theo}

If only two domains are considered, namely, given $<{\mathcal{D}}_S,f_S>,<{\mathcal{D}}_\T,f_\T>$, recall the derivation of the proposed error bound; we have
\begin{equation}
\begin{aligned}
\epsilon_\T(\hat{f})
&\leq\mathbb{E}_{{{\mathcal{D}}}_\T}[|h-f_S|]+\mathbb{E}_{{{\mathcal{D}}}_\T}[|f_S-f_\T|]\\
&=\epsilon_\T(\hat{f},f_S)+\epsilon_\T(f_S,f_\T)\\
&=\mathbb{E}_{X\sim{{\mathcal{D}}}_S}\left[\frac{P_{\T}(X)}{P_{S}(X)}|\hat{f}-f_S|\right]+\epsilon_\T(f_S,f_\T)
\end{aligned}
\label{equ:bound_ours}
\end{equation}
Then we will prove that \myref{equ:bound_ours} is upper bounded by \myref{equ:bound_con}. At first, the second line in \myref{equ:bound_ours} is bounded by
\begin{equation}
\begin{aligned}
\epsilon_\T(\hat{f},f_S)+\epsilon_\T(f_S,f_\T)
&\leq \epsilon_S(\hat{f},f_S)+d_\mathcal{H}({\mathcal{D}}_S, {\mathcal{D}}_\T)+\epsilon_\T(f_S,f_\T)\\
&=\epsilon_S(\hat{f})+\epsilon_\T(f_S,f_\T)+d_\mathcal{H}({\mathcal{D}}_S, {\mathcal{D}}_\T).
\end{aligned}
\label{equ:proof_thigt1}
\end{equation}
Also, since the density ratio $\frac{P_\T(X)}{P_S(X)}$ is intractable and during implementation, this term is set to a constant and ignored. That is, the last line of \myref{equ:bound_ours} is approximately equal to 
\begin{equation}
\begin{aligned}
&\mathbb{E}_{X\sim{{\mathcal{D}}}_S}\left[\frac{P_{\T}(X)}{P_{S}(X)}|\hat{f}-f_S|\right]+\epsilon_\T(f_S,f_\T)\\
=&\epsilon_S(\hat{f})+\epsilon_\T(f_S,f_\T)\\
\leq& \epsilon_S(\hat{f})+\epsilon_S(f_S,f_\T)+d_\mathcal{H}({\mathcal{D}}_S, {\mathcal{D}}_\T)
\end{aligned}
\label{equ:proof_thigt2}
\end{equation}
Combining \myref{equ:proof_thigt1} and \myref{equ:proof_thigt2} we can get the error bound in \myref{equ:bound_ours} is upper bounded by $\epsilon_S(\hat{f})+d_\mathcal{H}({\mathcal{D}}_S, {\mathcal{D}}_\T)+\min\{\epsilon_S(f_S,f_\T),\epsilon_\T(f_S,f_\T)\}$, which completes our proof.

\subsection{Reformulation of the density ratio.}\label{sec:app_density}

In this subsection, we first introduce some important definitions of the distributionally robust optimization (DRO) framework \cite{ben2009robust} and then reformulate the density ratio under some necessary assumptions. In DRO, the expected worst-case risk on a predefined family of distributions $\mathcal{Q}$ (termed \textit{uncertainty set}) is used to replace the expected risk on the unseen target distribution $\mathcal{T}$ in ERM. Therefore, the objective is as follows. 
\begin{equation}
    \min_{\theta\in\Theta}\max_{q\in\mathcal{Q}}\mathbb{E}_{(x,y)\in q}[\ell(x,y;\theta)].
    \label{equ_dro}
\end{equation} 
Specifically, the uncertainty set $\mathcal{Q}$ encodes the possible test distributions on which we want our model to perform well. If $\mathcal{Q}$ contains $\mathcal{T}$, the DRO object can upper bound the expected risk under $\mathcal{T}$. 

The construction of uncertainty set $\mathcal{Q}$ is of vital importance. Here we reformulate the density ratio based on the KL-divergence ball constraint and other choices (\eg using the moment constraint~\cite{delage2010distributionally}, $f$-divergence~\cite{michel2021modeling}, Wasserstein/MMD ball~\cite{sinha2017certifying,staib2019distributionally}) will lead to different reweighting methods. Given the KL upper bound (radius) $\eta$, denote the empirical distribution $\mathcal{P}$, we have the uncertainty set  $\mathcal{Q}=\{Q:\text{KL}(Q||\mathcal{P})\leq\eta\}$. 
The Min-Max Problem in~\myref{equ_dro} can then be reformulated as
\begin{equation}
    \min_{\theta\in\Theta}\max_{Q:\text{KL}(Q||\mathcal{P})\leq\eta}\mathbb{E}_{(x,y)\in Q}\left[\ell(x,y;\theta)\right].
    \label{equ_kldro}
\end{equation}
Then we have the following theorem, which derives the optimal density ratio and converts the original problem to a reweighting version.
\begin{theo}
(Modified from Section 2 in ~\cite{hu2013kullback}) Assume the model family $\theta\in\Theta$ and $\mathcal{Q}$ to be convex and compact. The loss $\ell$ is continuous and convex for all $x\in\mathcal{X},y\in\mathcal{Y}$. Suppose that the empirical distribution $\mathcal{P}$ has density $p(x,y)$. Then the inner maximum of~\myref{equ_kldro} has a closed-form solution: $q^*(x,y)=\frac{p(x,y)e^{\ell(x,y;\theta)/\tau^*}}{\mathbb{E}_\mathcal{P}\left[e^{\ell(x,y;\theta)/\tau^*}\right]}$, 
where $\tau^*$ satisfies $\mathbb{E}_\mathcal{P}\left[\frac{e^{\ell(x,y;\theta)/\tau^*}}{\mathbb{E}_\mathcal{P}[e^{\ell(x,y;\theta)/\tau^*}]}\left(\frac{\ell(x,y;\theta)}{\tau^*}-\log \mathbb{E}_\mathcal{P}[e^{\ell(x,y;\theta)/\tau^*}]\right)\right]=\eta$ and $q^*(x,y)$ is the optimal density of $Q$. The min-max problem in~\myref{equ_kldro} is equivalent to
\begin{equation}
    \min_{\theta\in\Theta,\tau>0} \tau\log\mathbb{E}_{\mathcal{P}}\left[e^{\ell(x,y;\theta)/\tau}\right]+\eta\tau.
    \label{equ_kldro_convex}
\end{equation}
\end{theo}

\section{Dataset and implementation details}\label{sec:data_detail}
\subsection{Dataset Details}
\textbf{Colored MNIST} \cite{arjovsky2020invariant} consists of digits in MNIST with different colors (blue or red). The label is a noisy function of the digit and color. First, a preliminary label $\bar{y}$ is assigned to images based on their digits, $\bar{y}=0$ for digits 0-4 and $\bar{y}=1$ for digits 5-9. The final label is obtained by flipping $\bar{y}$ with probability $0.25$. The color signal $z$ of each sample is obtained by flipping $y$ with probability $p^d$, where $p^d$ is $\{0.2,0.1,0.9\}$ for three different domains. Finally, images with $z=1$ will be colored red and $z=0$ will be colored blue. This dataset contains $70,000$ examples of dimension $(2,28,28)$ and $2$ classes.

\noindent\textbf{Rotated MNIST} \cite{ghifary2015domain} consists of 10,000 digits in MNIST with different rotated angles where the domain is determined by the degrees $d \in \{0, 15, 30, 45,
60, 75\}$.

\noindent\textbf{PACS} \cite{li2017deeper} includes 9, 991 images with 7 classes $y \in  \{$ dog, elephant, giraffe, guitar, horse, house, person $\}$ from 4 domains $d \in$ $\{$art, cartoons, photos, sketches$\}$. 

\noindent\textbf{VLCS} \cite{torralba2011unbiased} is composed of 10,729 images, 5 classes $y \in \{$ bird, car, chair, dog, person $\}$ from domains $d \in \{$Caltech101, LabelMe, SUN09, VOC2007$\}$. 

\noindent\textbf{DomainNet}~\cite{peng2019moment} has six domains $d \in$ $\{$clipart, infograph, painting, quickdraw, real, sketch$\}$. This dataset contains $586,575$ examples of size $(3,224,224)$ and $345$ classes.

\end{document}